\newtheorem{proposition}{Proposition}
\begin{document}

\title{Robust Predictions with Ambiguous Time Delays: A Bootstrap Strategy}

\author{Jiajie Wang}
\affiliation{%
  \institution{Changsha Research Institute of Mining and Metallurgy}
  \city{Changsha}
  \country{China}}
\email{wangjj6@minmetals.com}

\author{Zhiyuan Jerry Lin}
\authornote{The work is performed during the author's personal capacity.}
\affiliation{%
  \institution{Meta}
  \city{Menlo Park}
  \country{USA}}
\email{zylin@meta.com}

\author{Wen Chen}
\authornote{Corresponding Author.}
\affiliation{
  \institution{Changsha Research Institute of Mining and Metallurgy}
  \city{Changsha}
  \country{China}
}
\email{wenchen@minmetals.com}

\renewcommand{\shortauthors}{Jiajie Wang, Zhiyuan Jerry Lin, \& Wen Chen}

\begin{abstract}
In contemporary data-driven environments, the generation and processing of multivariate time series data is an omnipresent challenge, often complicated by time delays between different time series. These delays, originating from a multitude of sources like varying data transmission dynamics, sensor interferences, and environmental changes, introduce significant complexities. Traditional Time Delay Estimation methods, which typically assume a fixed constant time delay, may not fully capture these variabilities, compromising the precision of predictive models in diverse settings.

To address this issue, we introduce the Time Series Model Bootstrap (TSMB), a versatile framework designed to handle potentially varying or even nondeterministic time delays in time series modeling. Contrary to traditional approaches that hinge on the assumption of a single, consistent time delay, TSMB adopts a non-parametric stance, acknowledging and incorporating time delay uncertainties. TSMB significantly bolsters the performance of models that are trained and make predictions using this framework, making it highly suitable for a wide range of dynamic and interconnected data environments.

Our comprehensive evaluations, conducted on real-world datasets with different types of time delays, confirm the adaptability and effectiveness of TSMB in multiple contexts. These include, but are not limited to, power and occupancy forecasting in intelligent infrastructures, air quality monitoring, and intricate processes like mineral processing. Further diagnostic analyses strengthen the case for the TSMB estimator's robustness, underlining its significance in scenarios where ambiguity in time delays can have a significant impact on the predictive task.

\end{abstract}

\begin{CCSXML}
<ccs2012>
   <concept>
       <concept_id>10010147.10010257.10010321</concept_id>
       <concept_desc>Computing methodologies~Machine learning algorithms</concept_desc>
       <concept_significance>500</concept_significance>
       </concept>
   <concept>
       <concept_id>10010147.10010257.10010293</concept_id>
       <concept_desc>Computing methodologies~Machine learning approaches</concept_desc>
       <concept_significance>500</concept_significance>
       </concept>
   <concept>
       <concept_id>10002951.10003227.10003351</concept_id>
       <concept_desc>Information systems~Data mining</concept_desc>
       <concept_significance>300</concept_significance>
       </concept>
   <concept>
       <concept_id>10002950.10003648.10003702</concept_id>
       <concept_desc>Mathematics of computing~Nonparametric statistics</concept_desc>
       <concept_significance>500</concept_significance>
       </concept>
 </ccs2012>
\end{CCSXML}

\ccsdesc[500]{Computing methodologies~Machine learning algorithms}
\ccsdesc[500]{Computing methodologies~Machine learning approaches}
\ccsdesc[500]{Mathematics of computing~Nonparametric statistics}
\ccsdesc[300]{Information systems~Data mining}
\keywords{Time Delay Estimation, Predictive Modeling, Bootstrap, Resampling, Time Series}

\maketitle

\section{Introduction}
\label{sec:intro}
In today's rapidly evolving data-centric landscape, a myriad of devices and sensors are seamlessly integrated, creating a vast and intricate system that continuously generates multivariate time series data. This rich data source is important for advancing data mining applications, enhancing predictive modeling, and refining data-driven decision-making processes across various fields, as underlined by contemporary research \citep{lin2021probability, brunner2021challenges}.

However, when employing this data for predictive analysis, practitioners encounter a significant hurdle: time delays in data collection and transmission. These delays are not confined to any specific domain but are a widespread challenge affecting numerous sectors. For instance, in urban monitoring systems, sensors disseminated throughout a city to measure air quality might report data at different times due to diverse factors like transmission paths or environmental disturbances. This results in data from one sensor arriving with a delay relative to another, complicating real-time air quality predictions for a given area.
These discrepancies, inherent in the sequential and intricate nature of modern industrial, urban, and technological environments, render the raw time series data more complex and challenging to analyze directly.

To elucidate this phenomenon, consider a hypothetical smart city environment depicted in Figure~\ref{fig:illustration}:
\begin{figure}[ht]
  \centering
  \includegraphics[width=0.8\linewidth]{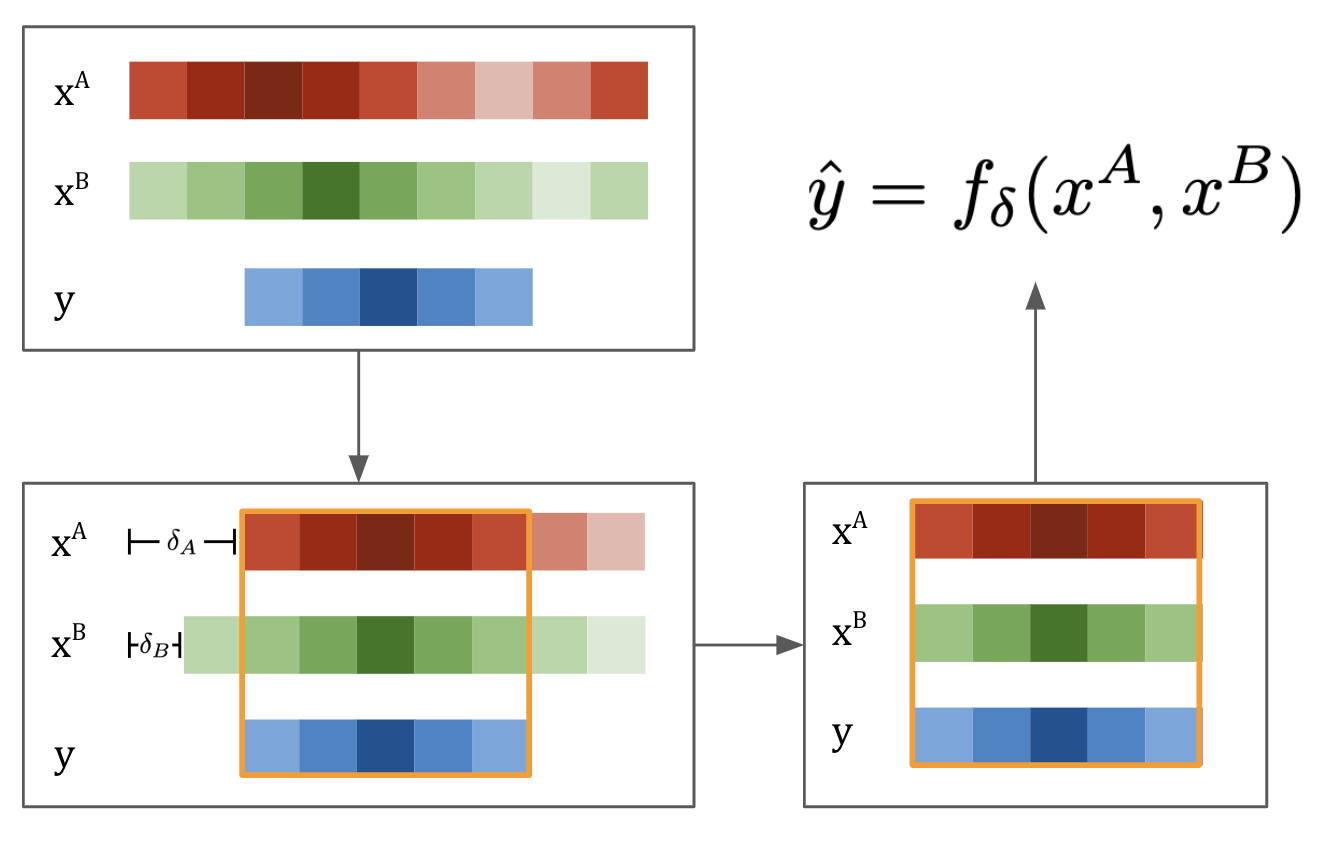}
  \caption{An illustration of the challenge of time delay estimation in multivariate time series. 
  In this scenario, $\bm{\delta}=\{\delta_A, \delta_B\}$ are well-defined and have unique values, 
  which isn't always the case in real-world applications involving unpredictable events and fluctuating noise.}
  \label{fig:illustration}
\end{figure}
Imagine a smart city scenario where sensors A and B are deployed to monitor the traffic flow and air quality respectively. 
Sensor A captures the volume of vehicles at a particular intersection, 
while Sensor B measures the speed of wind in a nearby park.
The data captured by these sensors starting at time $t$ over a span of time $m$ can be represented as two vectors:
\begin{align*}
    \bm{x}^A(t)=[x^{A}_t, x^{A}_{t+1}, \dots, x^{A}_{t+m-1}]^T\\
    \bm{x}^B(t)=[x^{B}_t, x^{B}_{t+1}, \dots, x^{B}_{t+m-1}]^T.
\end{align*}
We are interested in monitoring and predicting the air quality of a residential area in the neighborhood for the same period of time, denoted as
\begin{align*}
    \bm{y}(t) = [y_t, y_{t+1}, \dots, y_{t+m-1}]^T.
\end{align*}

Given the dynamic nature of urban environments, different wind conditions and events at the intersection, such as a traffic jam, can affect air quality after a certain delay due to the dispersion of pollutants.
Directly utilizing the raw data from sensors A and B to predict $\bm{y}(t)$ can lead to inaccuracies, 
given the inherent delay between the traffic situation and its subsequent impact on air quality.
By adjusting the data streams to 
\begin{align*}
\bm{x}^A(t+\delta_A) = [x^{A}_{t+\delta_A}, x^{A}_{t+\delta_A+1}, \dots, x^{A}_{t+\delta_A+m-1}]^T\\
\bm{x}^B(t+\delta_B) = [x^{B}_{t+\delta_B}, x^{B}_{t+\delta_B+1}, \dots, x^{B}_{t+\delta_B+m-1}]^T,
\end{align*}
we ensure proper alignment and synchronization of the sensor readings.
Thus, a predictive model can then formulate $y_t$ more accurately based on the time-aligned readings from both sensors:
$$\hat{y}_t = f(x^{A}_{t+\delta_A}, x^{B}_{t+\delta_B}).$$
Estimating such delays, termed the \textit{time delay estimation (TDE)} problem~\citep{Knapp1976-sr, chen2004time, bjorklund2003review}, is pivotal and there is a rich collection of literature, which we discuss in Section~\ref{sec:related_work}.
Existing TDE techniques typically select the time delay vector $\bm{\delta}$ to be the one maximizing a specific score function, such as cross-correlation or mutual information~\citep{Knapp1976-sr, Mars1982-nw, He2013-ne, Radhika2014-ob}.

However, one fundamental assumption most existing TDE methods rely on is that there exists an unique, constant-valued time delay, which often does not hold in many real-world applications.
Taking our motivating air quality prediction task as an example: during peak traffic hours, the delay between increased vehicle emissions and reduced air quality might vary depending on wind speed and direction, which itself is subject to sudden changes due to weather patterns. Similarly, the wind speed measured in the park does not consistently influence air quality in the same manner; its impact can be delayed or altered by urban structures, green spaces, and atmospheric conditions. These varying delays mean that the time lag between cause (status of the pollutants reflected by sensor readings) and effect (air quality) is not fixed but fluctuates in a manner that traditional time delay estimation methods assuming deterministic delays struggle to accurately capture.
Consequently, accurately predicting air quality requires an approach capable of adapting to the inherently unpredictable nature of these time delays, emphasizing the need for new modeling techniques that can account for such variability.

The situation becomes even more complicated when the end goal is not to estimate the time delay per se, but to use this information to construct accurate predictive models (e.g., for air quality prediction).
Even if one is able to obtain a probabilistic representation of the time delay --- a challenge already faced by classic TDE methods --- it is not immediately clear how we can use this information to improve off-the-shelf machine learning models' performance on this prediction task without making assumptions about the predictive models' architectures.
While TDE plays a pivotal role in multivariate time series modeling, it is crucial to underscore that in many applications, the overarching goal is to maximize the performance of downstream predictive tasks.
The actual time delays, although important for model alignment, are often secondary in significance.
The precise formulation of this problem is detailed in Section~\ref{sec:tsmb}.

In this work, we present a novel framework Time Series Model Bootstrap (TSMB) to help with multivariate time series modeling in the presence of potentially non-deterministic time delays.
TSMB does not require explicit assumptions about the nature of these delays and is designed to integrate seamlessly with any black-box predictive model, offering straightforward implementation and an inherent statistical interpretation.

Finally, it is worth noting that while some classical time series models~\citep{brockwell2002introduction} and more recent neural network models~\citep{salinas2020deepar, vaswani2017attention, goodfellow2016deep, lim2021temporal}
in theory have the capacity of learning and incorporating time delay information automatically, directly accounting for the (potentially stochastic) time delay, such as our proposed method TSMB, can further bolster their predictive performance as we later show in Section~\ref{sec:experiment}.

In summary, in this paper
we spotlight the often-overlooked issue of possibly non-deterministic time delays in misaligned multivariate time series modeling. Our approach not only addresses the complexities associated with non-deterministic time delays but also demonstrates that effectively managing these delays can significantly enhance models' predictive performance.
We then introduce TSMB, Time Series Model Bootstrap, an innovative framework adept at handling both stochastic and fixed time delays prevalent in real-world datasets, without making explicit assumptions about the form of such time delays.
Designed to work with any black-box predictive model, TSMB can be easily implemented while having a natural statistical interpretation.
Finally, we empirically showcase TSMB's superior performance over classic TDE methods across a range of real-world predictive tasks with various time delay types using both time-series transformer and classical machine learning models.
Additionally, we delve deep into TSMB's characteristics, shedding light on aspects like prediction coverage and the nuances of time delay estimation.

\section{Related Work}
\label{sec:related_work}

\textbf{Time delay estimation}
Time series analysis is a classic field of study in data mining, statistical learning, and data analysis. Time delay estimation, or TDE, is a problem of estimating the relative time difference between different streams of signals, often being presented in the form of multivariate time series.
TDE is widely used in multivariate time series datasets with sequential structures between time series such as ones from industrial processes, seismology, acoustics, and communication.
The most popular TDE method is the \textit{generalized cross-correlation (GCC)}, originally proposed by \citet{Knapp1976-sr}, whose core idea is to identify a time delay to maximize the cross-correlation between two time series. This method has been extensively studied and proven to work well in reasonably noisy environments~\citep{ianniello1982time, champagne1996performance} and has been extended, for example, to adapt special noise types such as reverberation in acoustics~\citep{Benesty2004-gu, chen2003robust}.
Another similar idea is to use \textit{time-delayed mutual information (TDMI)}~\citep{Mars1982-nw, albers2012using}, instead of cross correlation, which is expected to perform better in non-linear systems.
In practice, mutual information can be estimated non-parametrically using k-nearest neighbor distances~\citep{Kraskov2004-li,Ross2014-kd}.
When more than two time series are present, instead of estimating the time delay separately for each time series, methods using joint mutual information (also known as non-mutual information methods) are employed~\citep{Ruan2014-kj, He2013-ne, Radhika2014-ob}. There are other methods using PCA~\citep{Chen2020-pm}, random walk~\citep{Ohira1997-br}, and Wasserstein distance~\citep{nichols2019time}.
Despite these methods' claimed improved performance, many of them are computationally expensive and prohibitively slow when there are more than a few time delays to be estimated.
Generalized cross-correlation and mutual information remain the most widely used and proven to be robust among other methods.
Additionally, TDE is also known as time delay identification or time delay signature extraction in the field of communication and security~\citep{Rontani2009-uc}, and relatedly, its reverse problem time delay concealing is also being extensively studied~\citep{Gao2021-gg, li2013hybrid, xiang2016suppression, han2020generation}.

\textbf{Sequence alignment}
Separately, \textit{sequence alignment} is another related stream of research widely used and studied in areas like bioinformatics. Dynamic programming algorithms such as dynamic time warping (DTW)~\citep{bellman1959adaptive, muller2007dynamic} and the Needleman-Wunsch algorithm~\citep{likic2008needleman} are often used to calculate global alignment while local alignment algorithms like Smith-Waterman algorithm~\citep{xia2022review} can be used for more efficient alignment at the expense of potentially sub-optimal matching.
However, sequence alignment algorithms usually seek for exact matching between two sequences, and in the problem concerned in this paper, such requirement is generally not met, rendering this class of methods not applicable here.
There are also tensor-completion-based methods such as \citet{qian2021multi} and \citet{liu2019costco} applicable to spatial-temporal data, but their aim is to rectify missing or inaccurate data, which is not the setting this paper is focusing on.

\textbf{Time Series Bootstrap}
Bootstrap is a resampling method that approximates the true population's distribution using random sampling with replacement.
Bootstrap is often used to quantify uncertainty around sample estimates.
One implicit assumption bootstrap methods make is that the sampled data is identically and independently distributed (IID), which is obviously violated when it comes to time series data.
To overcome this problem, several methods are commonly used to perform bootstrap on time series data.
Possibly the most widely used time series bootstrap method is block bootstrap~\citep{lahiri1999theoretical, kunsch1989jackknife}, whose central idea is to sample continuous segments (``blocks'') of the time series with replacement.
This method will maintain local sequential dependency but not necessarily global structure.
Another popular choice is sieve bootstrap~\citep{buhlmann1997sieve, andre2002forecasting}.
Instead of resampling the data itself, new data is generated using an autoregressive model and individual residuals are sampled from the data.
Other approaches and variants include the wild bootstrap~\citep{wu1986jackknife} and stationary bootstrap~\citep{politis1994stationary}.

\section{Time Series Model Bootstrap}
\label{sec:setup_and_method}
In this section, we formally present the problem we study in this paper.
We then describe our proposed method TSMB
in detail.

\subsection{Problem Definition}
\label{sec:setup}
The fundamental problem we are concerned about in this paper is multivariate time series modeling with (potentially stochastically) misaligned time series data.
To formally define the problem, assume we observe a dataset $\mathcal{D} = \{\mathcal{X}, \bm{y}\}$ consisting of a feature matrix $\mathcal{X}$ and a target vector of interest $\bm{y}$.
Vector $\bm{y} = [y_t, y_{t+1}\dots, y_{t+m-1}]^T$ has a starting time $t$ and length of $m$.
Examples of $\bm{y}$ include whether a room is currently occupied in a smart home setting (a classification task) or the particle size of the concentrate in mineral processing (a regression task).
The feature matrix $\mathcal{X} = [\bm{x}^1, \bm{x}^2, \dots, \bm{x}^n]$ represents other observed multivariate time series data where each individual time series $\bm{x}^i = [x^i_0, x^i_2, \dots, x^i_{M-1}]^T$ is a sequence of recorded variables (e.g., measurements over time from a sensor) for some integer $M \geq m$.
Each $\bm{x}^i$ in $\mathcal{X}$ may be misaligned with the target vector $\bm{y}$ differently.
While classic TDE methods assume such misalignment is $\bm{x}^i$ being shifted by a fixed value, in practice, $\bm{x}^i$ can be misaligned with $\bm{y}$ in arbitrary ways.
The goal here is to construct a predictive model trained on $\mathcal{D}$ that can accurately predict out-of-sample $\bm{y'}$ with corresponding $\mathcal{X}'$.

In a well-behaved misaligned time series data, there are (unobserved) delays $\bm{\delta} = \{\delta_1, \delta_2, \dots, \delta_n\}$ between each $\bm{x}^i$ and $\bm{y}$ such that element $x^i_{t+\delta_i}$ would correspond to element $y_t$ in perfectly aligned data.
For example, $\bm{x}^i$ could be a grinding mill's power at the beginning of a mineral processing workflow and $y_t$ is the particle size of the concentrate (i.e., the granularity of the refined mineral of interest) measured at the end of the workflow.
Without loss of generality, it is safe to assume $0 \leq t < t+m \leq M$ and $\delta_i, t, m, M \in \mathbb{Z^+}$ where $\mathbb{Z^+}$ is the set of all positive integers.
Then given a family of machine learning models, we would like to construct a time-aligned dataset $\mathcal{D}_{\bm{\delta}} = \{\mathcal{X}(\bm{\delta}), \bm{y}\}$ and a predictive model $f_{\bm{\delta}}$ such that it can accurately and robustly predict some future $\bm{y}'$.
With traditional TDE methods, matrix $\mathcal{X}(\bm{\delta})$ is typically constructed as
$\mathcal{X}(\bm{\delta}=\bm{\hat{\delta}}) = [\bm{x}^1(t+\hat{\delta}_1), \bm{x}^2(t+\hat{\delta}_2), \dots, \bm{x}^n(t+\hat{\delta}_n)]$
where $\bm{\hat{\delta}} = \{\hat{\delta}_1, \hat{\delta}_2, \dots, \hat{\delta}_n\}$ is the inferred time delay vector
and $\bm{x}^i(t+\hat{\delta}_i) = [x^i_{t+\hat{\delta}_i}, x^i_{t+\hat{\delta}_i+1}\dots, x^i_{t+\hat{\delta}_i+m-1}]^T$ is a shifted subsequence of the original time series $\bm{x}^i$ correcting for the estimated time delay.
In classic TDE methods, $\bm{\hat{\delta}}$ is obtained by maximizing some score function $S(\mathcal{D}_{\bm{\hat{\delta}}})$.
Common choices of the score function $S$ include GCC and TDMI.

However, the time-varying noise in some data may undermine our capability of accurately identifying the time delays using classic TDE methods.
More importantly, in many real-world problems with complex dynamics and chaotic systems (e.g., smoke or ocean turbulence in fluid mechanics or grinding processes in mineral processing), instead of having a unique, identifiable value, the time delay $\bm{\delta}$ may be a random variable itself.
In these scenarios, both the time delay estimation and the downstream prediction modeling task can become very challenging for classic TDE methods as their fundamental assumption that there exists a unique time delay vector $\bm{\delta}$ is violated.

We note that although the problem formulation resembles the classic time delay estimation problem, it differs in that the estimation of time delay is not the end goal.
Instead, in this problem we call \textit{misaligned multivariate time series modeling}, we aim to use the estimated time delay as auxiliary information to improve the performance of the predictive model for some downstream prediction task.

\subsection{Method}
\label{sec:tsmb}

Now we introduce our proposed framework \emph{Time Series Model Bootstrap (TSMB)}, for misaligned multivariate time series modeling.
Model prediction obtained via this framework is referred to as the \textit{TSMB estimator}.

Most traditional TDE methods implicitly assume that there exists a unique value of time delay $\bm{\delta}$ represented by its point estimate $\bm{\hat{\delta}}$.
When we build a machine learning model with aligned dataset $\mathcal{D}_{\bm{\hat{\delta}}} = \{\mathcal{X}(\bm{\hat{\delta}}), \bm{y}\}$, 
the model is technically estimating $\mathbb{E}[Y|X, \bm{\delta}=\bm{\hat{\delta}}]$ instead of $\mathbb{E}[Y|X]$, the typical desideratum of machine learning tasks.
This is a fine approximation when there \emph{does} exist a unique $\bm{\delta}$ that we can accurately estimate.
When this condition does not hold, which is likely in many real-world scenarios with complex dynamics,
the prediction $\hat{y} = f_{\bm{\hat{\delta}}}(x) = \mathbb{E}[Y|X=x, \bm{\delta}=\bm{\hat{\delta}}]$ may be a biased estimator of the real estimand $\mathbb{E}[Y|X]$ when evaluated on new, out-of-sample data.

\begin{algorithm}[ht]
  \caption{Time series model bootstrap (TSMB)}
  \label{alg:tsmb}
  \begin{algorithmic}
    \Require $\mathcal{D}_{train}=\{\mathcal{X}_{train}, \bm{y}_{train}\}$
    \Require $\mathcal{D}_{test}=\{\mathcal{X}_{test}, \bm{y}_{test}\}$
    \Require Score function $S$
    \State \texttt{\# Model training with TSMB}
    \State \texttt{model\_list} $\gets \emptyset$
    \State \texttt{td\_list} $\gets \emptyset$
    \For{\texttt{b = 1..B}}
    \State Draw block bootstrap sample $\mathcal{D}^b$ from $\mathcal{D}_{train}$
    \State $\bm{\hat{\delta}}^b = \text{argmax}_{\bm{\delta}} S(\bm{\delta}; \mathbf{D}_b)$
    \State Fit $f_{\bm{\hat{\delta}}^b}$ on data $\mathcal{D} = \{\mathcal{X}_{train}(\bm{\hat{\delta}}^b), \bm{y}_{train}\}$
    \State \texttt{td\_list} $\gets$ \texttt{td\_list.append}$(\bm{\hat{\delta}}^b)$
    \State \texttt{model\_list} $\gets$ \texttt{model\_list.append}$(f_{\bm{\hat{\delta}}^b})$
    \EndFor

    \State \texttt{\# Making predictions with TSMB}
    \State $\bm{y}_{pred} \gets \bm{0}$
    \For{\texttt{b = 1..B}}
    \State $\bm{\hat{\delta}}^b \gets$ \texttt{td\_list[b]}
    \State $f_{\bm{\hat{\delta}}^b} \gets$ \texttt{model\_list[b]}
    \State $\bm{y}_{pred, b} \gets f_{\bm{\hat{\delta}}^b}(\mathcal{X}_{test}(\bm{\hat{\delta}}^b))$
    \State $\bm{y}_{pred} \gets \bm{y}_{pred} + \bm{y}_{pred, b}$
    \EndFor
    \State $\bm{y}_{pred}  \gets \bm{y}_{pred} / B$\\
    \Return $\bm{y}_{pred}$
  \end{algorithmic}
\end{algorithm}

While we might not be able to pinpoint an exact time delay value under the aforementioned scenarios, it is still possible to treat $\bm{\delta}$ as a random variable and describe its value as a probability distribution.
By obtaining a bootstrap sample $\mathcal{D}^b$ (e.g., via block bootstrap~\citep{kunsch1989jackknife, lahiri1999theoretical}) from the original dataset $\mathcal{D}$,
we are able to attain a sample of $\bm{\hat{\delta}}^b$ by maximizing the score function $S$ on each individual bootstrap dataset sample $\mathcal{D}^b$.
$\bm{\hat{\delta}}^b$ can be regarded as a sample drawn from the bootstrap time delay distribution $\mathcal{F}^B_{\bm{\delta}}$, which in general can be treated to be a reasonable approximation of the true underlying time delay distribution $\mathcal{F}_{\bm{\delta}}$, as is commonly assumed in bootstrap methods.
With a $B$ bootstrap time delay samples, we have the empirical bootstrap distribution of $\bm{\delta}$.
Recall that given a fixed dataset, the predictive model is a function of $\bm{\delta}$ and we are able to fit a model $f_{\bm{\hat{\delta}}^b}(x)$ to approximate $\mathbb{E}[Y|X=x, \bm{\delta}=\bm{\hat{\delta}}^b]$ similar to how we do predictive modeling with traditional TDE methods, but with a bootstrap time delay $\bm{\hat{\delta}}^b$.
After fitting and obtaining predictions from each of these bootstrapped models, we are able to calculate our prediction $\hat{y}$ by averaging over $B$ different bootstrap models.
Algorithm~\ref{alg:tsmb} describes the pseudocode to fit models and make predictions with TSMB.

By treating $\bm{\delta}$ as a random variable, obtaining the model prediction by integrating out $\bm{\delta}$ is the optimal decision-theoretic approach.
Because we obtain model's predictive samples based on time series bootstrap,
we refer to this method as Time Series Model Bootstrap, or TSMB for brevity.
In this work, we use block bootstrap as the time series bootstrap method in this work, but other time series bootstrap methods can be applied as well.
Proposition~\ref{thm:tsmb} shows that TSMB is a finite sample approximation of $\mathbb{E}[Y|X=x]$, the quantity we typically estimate with machine learning models in the absence of time delays.

\begin{proposition}
\label{thm:tsmb}
Assume the time delay $\bm{\delta}$ is a random variable and $f_{\bm{\delta}}(x) = \mathbb{E}[Y|X=x, \bm{\delta}]$ is the model prediction given a realized $\bm{\delta}$,
the TSMB estimator is a finite sample approximation of $\mathbb{E}[Y|X=x]$.
\end{proposition}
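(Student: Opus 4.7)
The plan is to decompose the target $\mathbb{E}[Y\mid X=x]$ by conditioning on $\bm{\delta}$ and then argue that the TSMB average is a Monte Carlo estimate of that same quantity, with the Monte Carlo draws generated from a bootstrap distribution that approximates the true law of $\bm{\delta}$. Concretely, the first step is to apply the tower property: since $\bm{\delta}$ is a random variable and $f_{\bm{\delta}}(x)=\mathbb{E}[Y\mid X=x,\bm{\delta}]$, we have
\begin{equation*}
\mathbb{E}[Y\mid X=x] \;=\; \mathbb{E}_{\bm{\delta}\sim \mathcal{F}_{\bm{\delta}}}\!\bigl[\,\mathbb{E}[Y\mid X=x,\bm{\delta}]\,\bigr] \;=\; \mathbb{E}_{\bm{\delta}\sim \mathcal{F}_{\bm{\delta}}}\!\bigl[f_{\bm{\delta}}(x)\bigr].
\end{equation*}
This turns the original conditional expectation into an integral of $f_{\bm{\delta}}$ against the (unknown) true time-delay law $\mathcal{F}_{\bm{\delta}}$.

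Next I would invoke the bootstrap principle stated in Section~\ref{sec:tsmb}: the empirical bootstrap distribution $\mathcal{F}^B_{\bm{\delta}}$ induced by drawing block-bootstrap samples $\mathcal{D}^b$ and computing $\bm{\hat{\delta}}^b = \arg\max_{\bm{\delta}} S(\bm{\delta};\mathcal{D}^b)$ is taken as an approximation of $\mathcal{F}_{\bm{\delta}}$. Under this substitution,
\begin{equation*}
\mathbb{E}_{\bm{\delta}\sim\mathcal{F}_{\bm{\delta}}}[f_{\bm{\delta}}(x)] \;\approx\; \mathbb{E}_{\bm{\delta}\sim\mathcal{F}^B_{\bm{\delta}}}[f_{\bm{\delta}}(x)].
\end{equation*}
The right-hand side is an expectation over the bootstrap distribution, which can be estimated by its Monte Carlo counterpart using the $B$ independent bootstrap draws $\bm{\hat{\delta}}^1,\dots,\bm{\hat{\delta}}^B$. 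By a standard law-of-large-numbers argument (the $f_{\bm{\hat{\delta}}^b}(x)$ are i.i.d.\ conditional on the data), the empirical average converges to the bootstrap expectation:
\begin{equation*}
\hat{y}_{\mathrm{TSMB}} \;=\; \frac{1}{B}\sum_{b=1}^{B} f_{\bm{\hat{\delta}}^b}(x) \;\xrightarrow[B\to\infty]{}\; \mathbb{E}_{\bm{\delta}\sim\mathcal{F}^B_{\bm{\delta}}}[f_{\bm{\delta}}(x)].
\end{equation*}

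Chaining the two approximations gives $\hat{y}_{\mathrm{TSMB}} \approx \mathbb{E}[Y\mid X=x]$ for finite $B$, which is exactly the claim that the TSMB estimator is a finite-sample approximation of $\mathbb{E}[Y\mid X=x]$. The main obstacle, and the step that deserves the most care in the write-up, is the middle approximation $\mathcal{F}^B_{\bm{\delta}}\approx \mathcal{F}_{\bm{\delta}}$: this is the classical bootstrap consistency claim applied to the plug-in functional $\bm{\delta}\mapsto \arg\max_{\bm{\delta}} S(\bm{\delta};\cdot)$ under block bootstrap. I would handle it by explicitly stating it as the standing bootstrap assumption already flagged in the text (\textit{``as is commonly assumed in bootstrap methods''}), rather than re-deriving consistency of block bootstrap for an $\arg\max$-type functional, since the proposition is stated as a finite-sample approximation and not as an exact identity. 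The Monte-Carlo step and the tower-property step are both routine.
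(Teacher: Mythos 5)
Your proposal is correct and follows essentially the same route as the paper's proof: law of total expectation to condition on $\bm{\delta}$, substitution of the bootstrap distribution $\mathcal{F}^B_{\bm{\delta}}$ for the true law $\mathcal{F}_{\bm{\delta}}$, and a finite Monte Carlo average over the $B$ draws yielding the TSMB estimator. If anything, your write-up is slightly more careful than the paper's, which collapses the bootstrap-consistency step and the Monte Carlo limit into a single displayed equality (justified afterwards via the Dvoretzky--Kiefer--Wolfowitz inequality), whereas you correctly flag them as two distinct approximations.
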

\begin{proof}
\unskip
\begin{align}
  \hat{y} &= \mathbb{E}[Y|X=x] \nonumber\\
    &= \mathbb{E}_{\bm{\delta}}[\mathbb{E}[Y|X=x, \bm{\delta}]] &&\text{by the law of total expectation} \nonumber\\
    &= \int_{\bm{\delta}}\mathbb{E}[Y|X=x, \bm{\delta}] \mathbf{P}(\bm{\delta}) d\bm{\delta} \nonumber\\
    &= \lim_{B\to\infty}\frac{1}{B} \sum_{b=1}^B \mathbb{E}[Y|X=x, \bm{\hat{\delta}}^b] &&\text{where } \bm{\hat{\delta}}^b \sim \mathcal{F}^B_{\bm{\delta}} \nonumber\\
    & &&  \text{the bootstrap distribution} \label{eq:tsmb_boot_approx}\\
    & \approx \frac{1}{B} \sum_{b=1}^B \mathbb{E}[Y|X=x, \bm{\hat{\delta}}^b] \label{eq:finite_tsmb_boot_approx}\\
    & =\frac{1}{B} \sum_{b=1}^B f_{\bm{\hat{\delta}}^b}(x) &&\text{the TSMB estimator} \nonumber
\end{align}
\end{proof}

\begin{table*}[ht]
  \begin{tabular}{l|cccc}
    \hline
    \textbf{Dataset} & \textbf{Size} & \textbf{\# Time Delays} & \textbf{Time Delay Type} & \textbf{Task}\\
    \hline
    \hline
    Occupancy - Fixed & 20,560 & 5 & Fixed & Classification\\
    Occupancy - Stochastic & 20,560 & 5 & Stochastic & Classification\\
    Water Pump Maintenance - Fixed & 220,320 & 6 & Fixed & Classification\\
    Water Pump Maintenance - Stochastic & 220,320 & 6 & Stochastic & Classification\\
    Power Demand - Fixed & 1,096 & 24 & Fixed & Classification\\
    Power Demand - Stochastic & 1,096 & 24 & Stochastic & Classification\\
    Air Quality - Fixed & 9,357 & 8 & Fixed & Regression\\
    Air Quality - Stochastic & 9,357 & 8 & Stochastic & Regression\\
    Mineral Processing & 1.2M/994 ($\bm{X}$/$\bm{y}$) & 10 & Unknown & Regression\\
    \hline
  \end{tabular}
  \vspace{1em}
  \caption{Overview of datasets used in the experiments.
  For the occupancy, water pump maintenance, and air quality datasets, time delays are introduced in two ways: a consistent "fixed" delay and a "stochastic" delay where time series data is adjusted at each timestamp based on random draws from a set of possible delays. The mineral processing dataset inherently has unknown delays due to the intricacies in its processes. Further details on the injected time delays can be found in Table~\ref{tab:dataset_detail} in the appendix.}
  \label{tab:datasets}
\end{table*}

The approximation presented in Equation~\ref{eq:tsmb_boot_approx} and \ref{eq:finite_tsmb_boot_approx} is a common assumption in bootstrap literature and is justified by the Dvoretzky–Kiefer–Wolfowitz inequality~\citep{massart1990tight}, which states that the empirical distribution function converges uniformly to the true distribution function at exponential rate in probability.

\section{Experiment}
\label{sec:experiment}
In this section, we present our empirical evaluation of TSMB against classic TDE methods
on nine different real-world datasets.
We then perform diagnostic analysis to examine a number of properties displayed by the TSMB estimator.
The code to reproduce the result is available at \url{https://github.com/HenryWang-000/TSMB}.

\subsection{Experiment Setup}
\subsubsection{Datasets}
\label{sec:datasets}
We assess nine diverse real-world predictive tasks, including six classification and three regression tasks. 
These datasets originate from various domains where sensor data is prevalent: occupancy detection~\citep{candanedo2016accurate}, water pump maintenance~\citep{waterpump2021}, air quality monitoring~\citep{de2008field}, power demand prediction~\citep{
dau2019ucr, keogh2006intelligent} and our privately collected mineral processing dataset.

Some of these datasets originally exhibit no apparent time delays. 
We introduce time delays in two distinct manners to encapsulate different scenarios one might encounter in practical applications. 
We first introduce fixed noises into these datasets by manually injecting a consistent time delay for each variate (i.e., feature) to simulate environments where delays are predictable and consistent over time.
On the other hand, to represent more volatile and dynamic systems, we introduce stochastic noises. In this case, at each time point, we draw from a set of five possible time delays, adjusting the time series data uniquely for each specific timestamp.
By introducing both constant and stochastic noises, we examine each method's robustness in handling both predictable and unpredictable time delay scenarios.

Table~\ref{tab:datasets} describes all datasets used in this work and Table~\ref{tab:dataset_detail} in Appendix~\ref{sec:dataset_detail} details the exact time delays injected in each datasets.

\textbf{Occupancy Detection}:
In smart environments, this dataset~\citep{candanedo2016accurate} predicts room occupancy based on variables like temperature, humidity, light, and $CO_2$. Time delays, both fixed and stochastic, are introduced to simulate real-world variability in data acquisition and processing.

\textbf{Water Pump Maintenance}:
Sourced from a public dataset~\citep{waterpump2021}, this dataset aims to determine the operational state of water pumps: normal, recovering, or broken. The dataset is adjusted for inherent imbalance by grouping ``recovering'' and ``broken'' instances together. The six most critical features are selected based on a random forest model analysis. To reflect realistic operational conditions, fixed and stochastic time delays are incorporated into the data.

\textbf{Italy Power Demand}:
This is from a public dataset~\citep{keogh2006intelligent} from the UCR~\citep{dau2019ucr} dataset, this collection includes 24 variables of unknown characteristics and time series data representing electricity demand from Italy.
Fixed and stochastic time delays are artificially injected to this dataset to create two different experimental datasets.

\textbf{Air Quality}:
Crucial for urban environment monitoring, this dataset provides hourly readings from various air quality sensors. The main goal is to predict CO concentrations based solely on these sensor readings, eliminating the need for traditional ground-truth validation methods. Both fixed and stochastic time delays are integrated into the dataset to simulate real-time data processing challenges.

\textbf{Mineral Processing}:
This is our privately collected dataset.
Contrasting the other datasets where time delays are artificially imposed, the mineral processing dataset naturally presents unknown time delays. 
The inherent unpredictability in procedures such as grinding underscores the complexity: grinding mills process ore particles over variable durations, releasing them based on specific size thresholds. 
Determining exact grinding times for individual particles is a considerable challenge, pointing to the presence of stochastic time delays. 
This dataset, sourced from an operational iron concentration mill, captures myriad sensor readings, including ore feed rates and grinding mill outputs. 
The key predictive metric is the hydrocyclone's overflow particle size, indicative of the mineral's refinement quality.

\begin{figure}[ht]
  \centering
  \includegraphics[width=\linewidth]{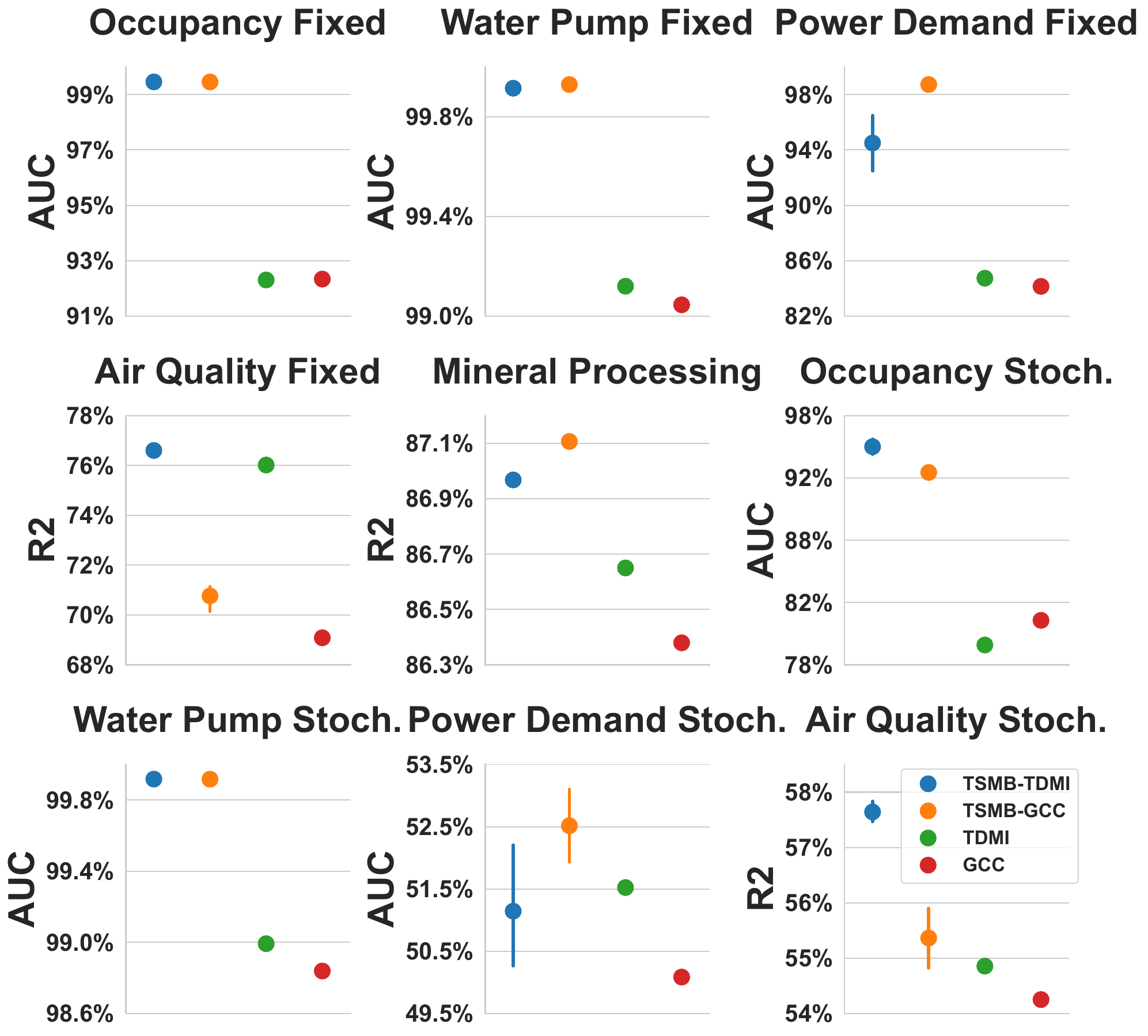}
  \caption{
Performance visualization of GBDT models applied across different datasets, showcasing the efficacy of various methods in handling time delays.
Each point indicates the AUC (for classification tasks) or $R^2$ (for regression tasks).
Across all datasets, TSMB methods consistently outperform traditional TDE techniques like TDMI and GCC.
Error bars represent 95\% CIs for TSMB-based methods and are generally small.
}
  \label{fig:mt_performance}
\end{figure}

\begin{table*}[ht]
\centering
\small
\setlength{\tabcolsep}{3pt} 
\begin{tabular}{p{2.5cm}cccccccccc}
\toprule
Method & \multicolumn{2}{c}{Occupancy} & \multicolumn{2}{c}{Pump Maintenance} & \multicolumn{2}{c}{Power Demand} & \multicolumn{2}{c}{Air Quality} & \begin{tabular}[c]{@{}c@{}}Mineral\\Processing\end{tabular} \\
& Fixed & Stochastic & Fixed & Stochastic & Fixed & Stochastic & Fixed & Stochastic & \\
\midrule
TSMB-TDMI (ours) & \textbf{0.995} & \textbf{0.950} & \textbf{0.999} & \textbf{0.999} & 0.945 & 0.511 & \textbf{0.766} & \textbf{0.571} & 0.870 \\
TSMB-GCC (ours) & \textbf{0.995} & 0.929 & \textbf{0.999} & \textbf{0.999} & \textbf{0.987} & \textbf{0.525} & 0.708 & 0.549 & \textbf{0.871} \\
TDMI & 0.923 & 0.791 & 0.991 & 0.990 & 0.847 & 0.515 & 0.760 & 0.544 & 0.867 \\
GCC & 0.923 & 0.811 & 0.990 & 0.988 & 0.841 & 0.501 & 0.691 & 0.538 & 0.864 \\
Real time delay & 0.988 & 0.988 & 0.991 & 0.991 & 0.964 & 0.964 & 1.000 & 1.000 & N/A \\
No Alignment & 0.728 & 0.722 & 0.979 & 0.979 & 0.509 & 0.519 & 0.085 & -0.106 & 0.860 \\
\bottomrule
\end{tabular}
\vspace{0.1in}
\caption{Absolute performance metrics ($R^2$ for regression tasks and AUC for classification tasks) for selected methods. For a comprehensive comparison including other variants of TSMB, refer to Table~\ref{tab:full_performance} in the appendix.}
\label{tab:main_performance}
\end{table*}

\subsubsection{Baselines}
\label{sec:baselines}
In our experiment, we benchmark the proposed TSMB method against classic TDE methods, which rely on obtaining a point estimate of the time delay.
The classic TDE baselines we consider are TDMI~\citep{Mars1982-nw, albers2012using} and GCC~\citep{Knapp1976-sr, tamim2010techniques}.
For these baselines, we first estimate the time delays with the corresponding score function, align the time series accordingly, and then fit the predictive model.
They are labeled as \textit{TDMI} and \textit{GCC} in all figures.
The proposed TSMB methods are labeled as \textit{TSMB-TDMI} and \textit{TSMB-GCC} respectively.
In the Appendix, we have further discussed a few possible variants of TSMB, where we demonstrate that while those variants offers computational trade-offs at different levels and generally perform better than traditional TDE methods, TSMB is still showing the strongest predictive performance among methods we have considered. 

We emphasize that while time delay estimation is an important component of the problem we consider, the focus of the problem we discuss in this paper is the misaligned multivariate time series predictive modeling problem, rather than the time delay estimation itself.
Therefore, when we evaluate the proposed method, we benchmark on the final model's (or a set of models') predictive performance measured in AUC or $R^2$.

\subsubsection{Experiment Setup}
Using the nine datasets summarized in Section~\ref{sec:datasets}, we evaluate TSMB against traditional TDE-based methods which rely on point estimates of the time delay.

Unless specified otherwise, we use a gradient boosted decision tree (GBDT) with 100 trees as the base predictive model for all of our experiments.
We have additionally examined TSMB's performance when used with temporal fusion transformers (TFT)~\citep{lim2021temporal} and those results are presented in Section~\ref{sec:tft_exp}, where we have observed consistent patterns compared GBDT's results, signaling the robustness of TSMB regardless the choice of the base predictive model.

We use
DIRECT as the optimization algorithm~\citep{jones1993lipschitzian, gablonsky2000locally} for all time delay optimization.
We smooth all datasets by applying moving averages to all $\bm{x}^i$ with a window size $w$ for both time delay estimation and model construction\footnote{We have also examined performing the prediction without moving average and have discovered that moving average boosts the performance of all methods, including traditional TDE baselines.}.
The moving average smoothing leads to improved performance in all methods and as a result, is used as a default setup in all experiments.
Instead of using a fixed $w$, we optimized it jointly with $\bm{\delta}$ in all time delay optimization to adapt to the distinctive characteristics of each dataset and method.
For all sampling-based methods, we use a sample size (e.g., $B$ in bootstrap-based methods) of 100. Despite that we use $B=100$, as demonstrated in Section~\ref{sec:choice_of_b}, similar performance can be achieved with a significantly smaller $B$ such as  $B=5$, mitigating the concern around computational cost.
We use block bootstrap~\citep{lahiri1999theoretical, kunsch1989jackknife} with block size being 0.25 of the training data for all time series bootstrap-based method. This proportion is small enough to allows for enough variation in the resampled time series and is big enough to retain enough structural information when calculating TDMI or GCC.
To evaluate predictive performance, we use AUC as for classification tasks and $R^2$ for regression tasks so that evaluation metrics are generally within the similar scale across different datasets unlike other common metrics such as MSE and is insensitive to class unbalance (for AUC).
Unless specified otherwise, all reported experiment results aggregated over five replications.

\subsection{Predictive Performance}
\label{sec:predictive_perf}

Our primary experimental results are presented in Figure~\ref{fig:mt_performance} and Table~\ref{tab:main_performance}.
Here, we focus on the relative performance gains achieved by applying different time series alignment methods.
Specifically, we use the AUC or $R^2$ as the evaluation metrics for classification and regression tasks respectively.

Interestingly, for datasets where we have artificially introduced time delays, the performance of TSMB methods sometimes surpasses that of models trained with the exact time delays we introduce, as highlighted by the "real time delay" row in Table~\ref{tab:main_performance}.
This suggests the presence of inherent time delays before our artificial injection or that the existing time delays in the dataset are more complex than can be captured by a singular point estimate.
The consistent outperformance of TSMB methods over traditional TDE methods underlines the limitations of using a single point estimate for predictive tasks, as is the norm with traditional TDE methods.

\begin{figure}[ht]
  \centering
  \includegraphics[width=\linewidth]{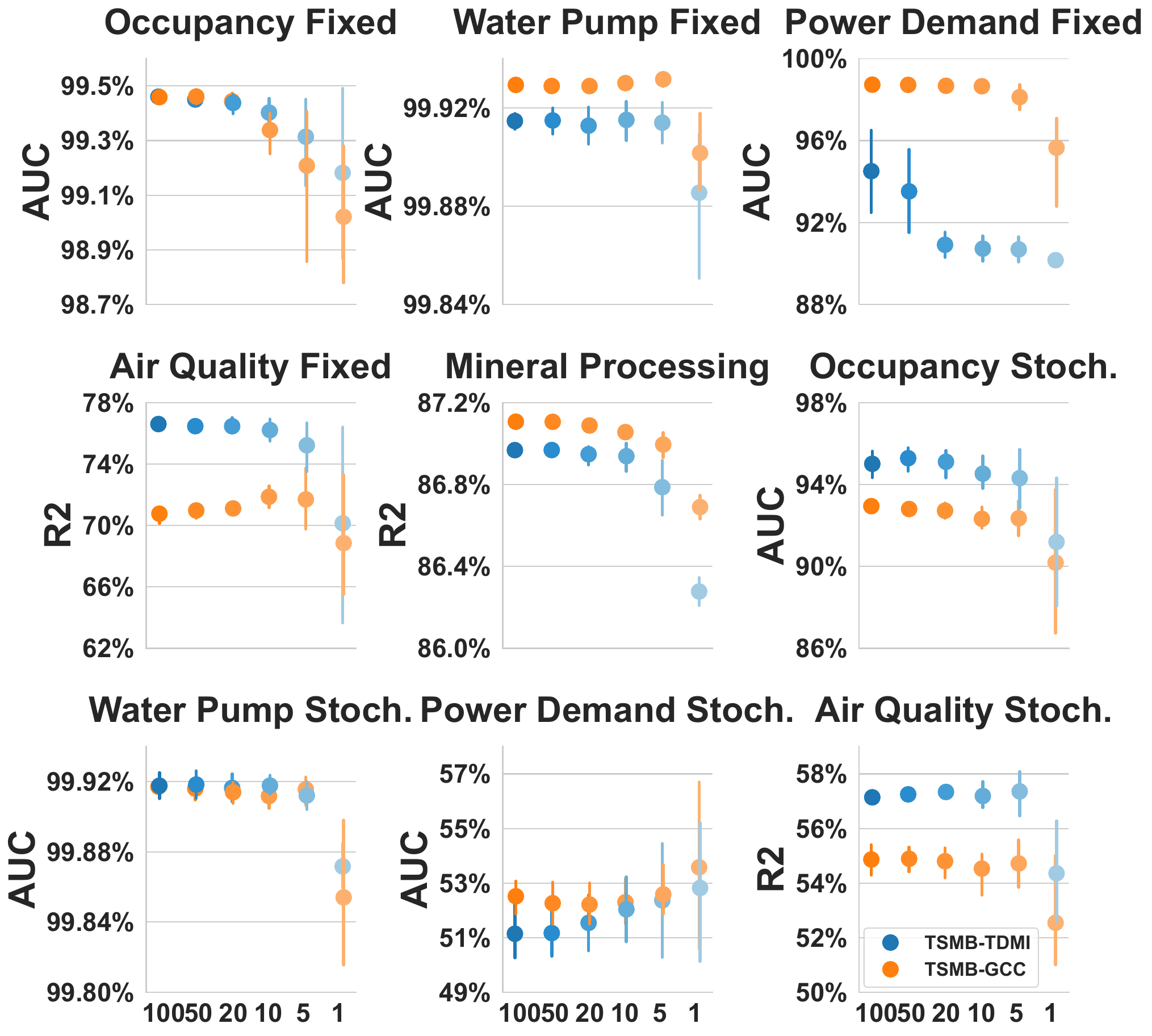}
  \caption{Ablation on bootstrap sample size $B$ for TSMB.
  The horizontal axis depicts the bootstrap sample size $B$.
  $B$ is minimally impacting the predictive performance of TSMB estimators.}
  \label{fig:abltaion_b_tsmb}
\end{figure}

\begin{figure}[ht]
  \centering
  \includegraphics[width=\linewidth]{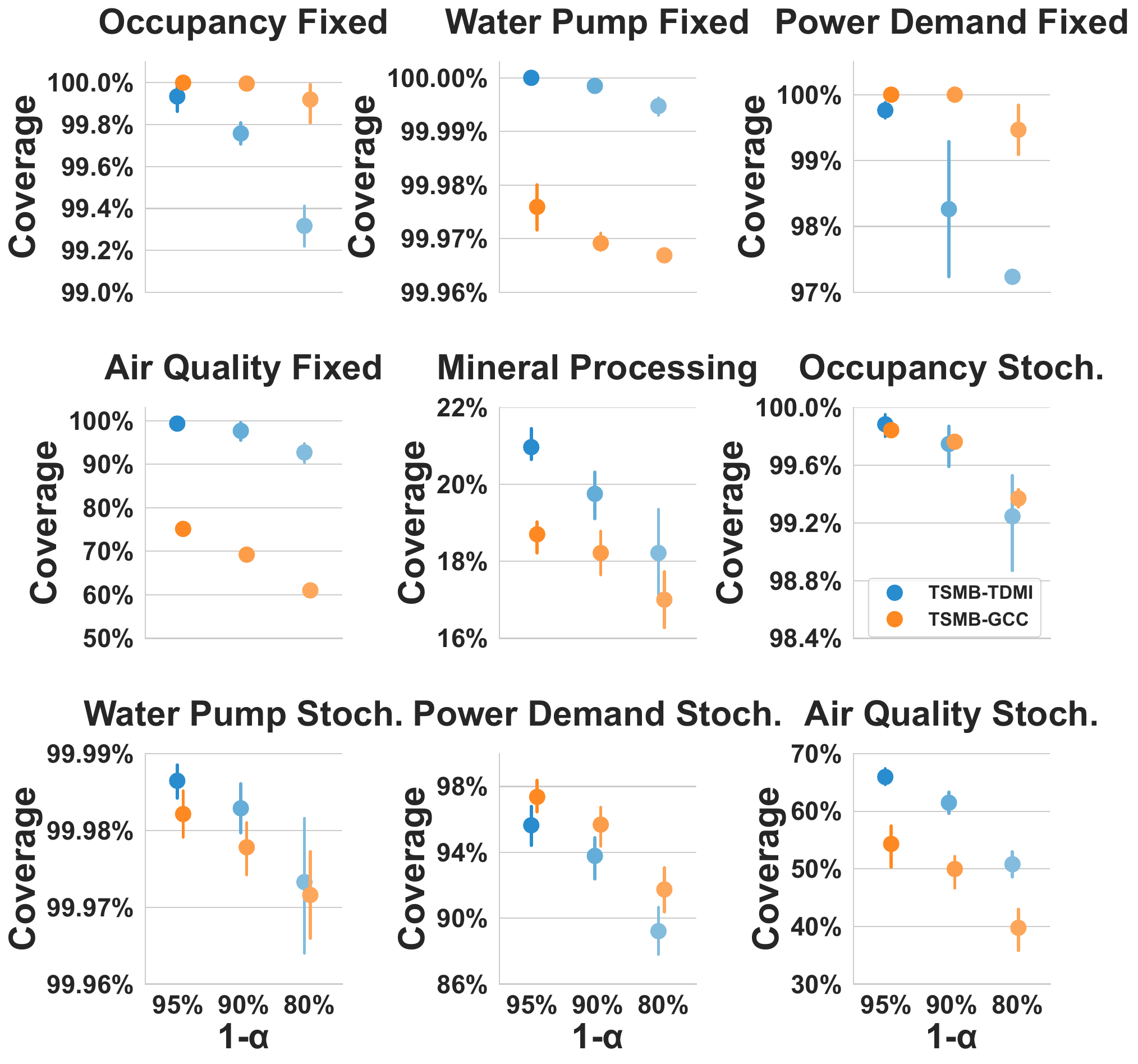}
  \caption{Bootstrap percentile $1-\alpha$ confidence interval coverage under TSMB. For classification tasks where we only observe binary values, we examine TSMB coverage using the corresponding point estimates given by TDMI or GCC.}
  \label{fig:coverage}
\end{figure}

\begin{figure*}[ht]
  \centering
  \begin{subfigure}[ht]{0.48\linewidth}
    \centering
    \includegraphics[width=\linewidth]{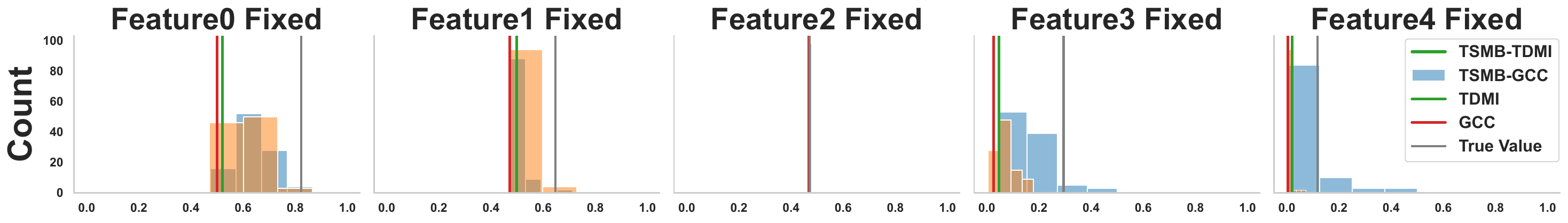}
    \includegraphics[width=\linewidth]{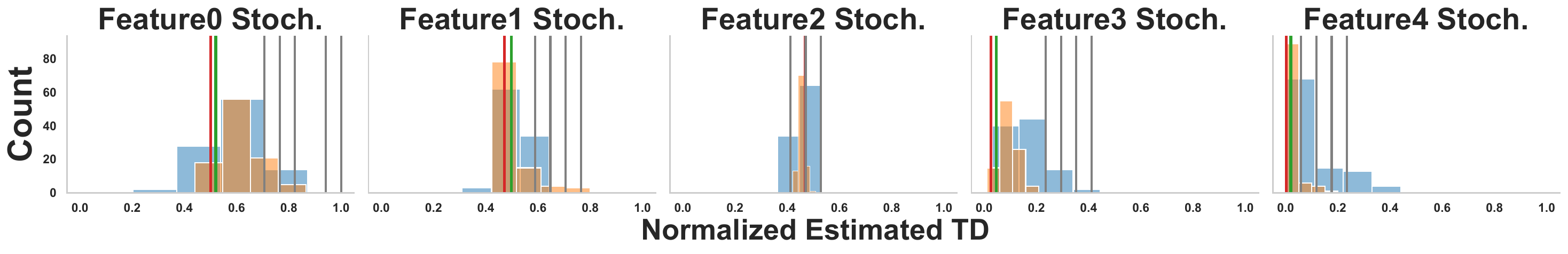}
    \caption{Occupancy time}
    \label{fig:time_delay_realdata}
  \end{subfigure}
  \begin{subfigure}[ht]{0.48\linewidth}
    \centering
    \includegraphics[width=\linewidth]{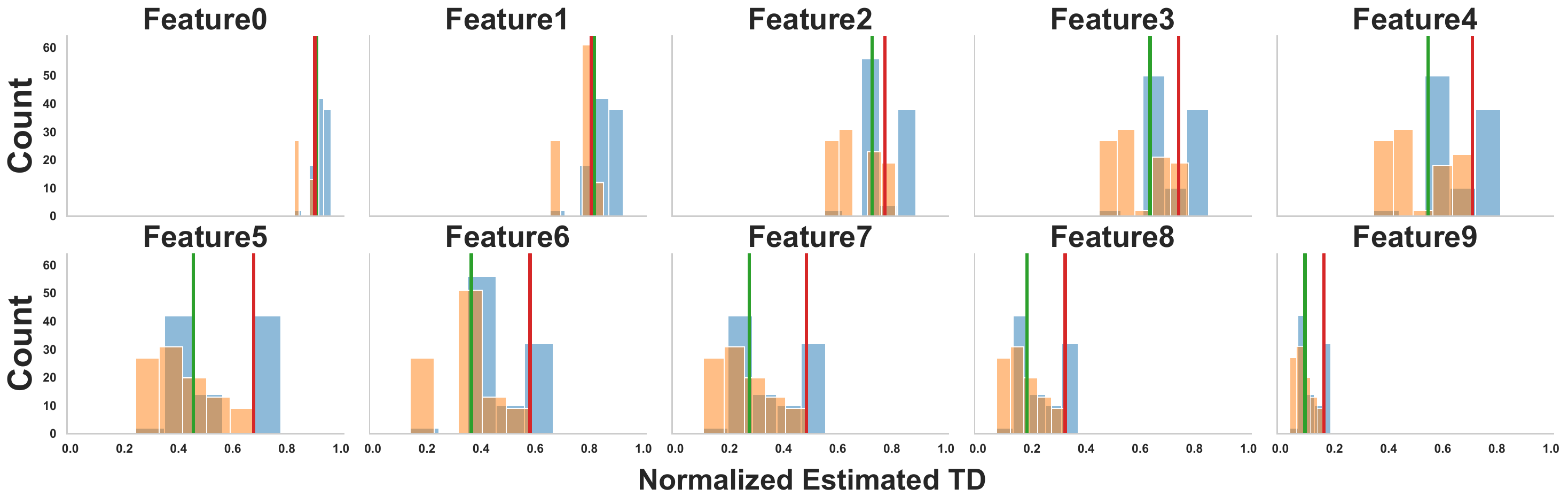}
    \caption{Mineral processing}
    \label{fig:time_delay_mineral_processing}
  \end{subfigure}
  \vspace{1em}
  \begin{subfigure}[ht]{0.45\linewidth}
    \centering
    \includegraphics[width=\linewidth]{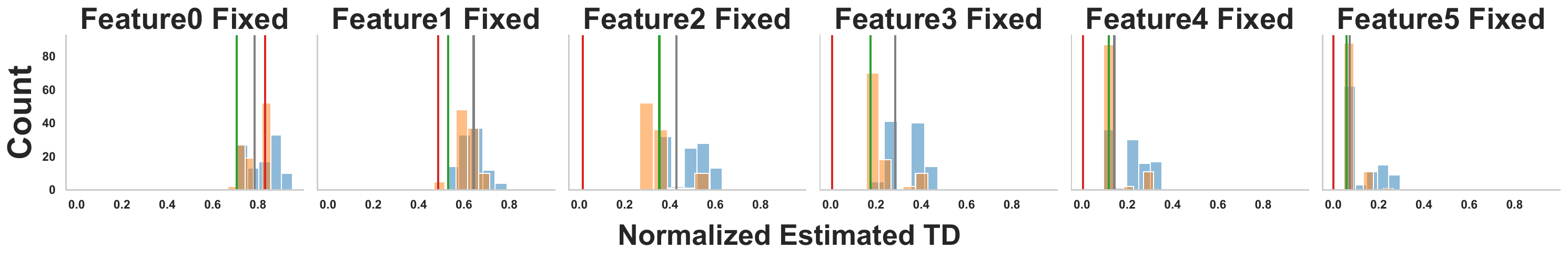}
    \includegraphics[width=\linewidth]{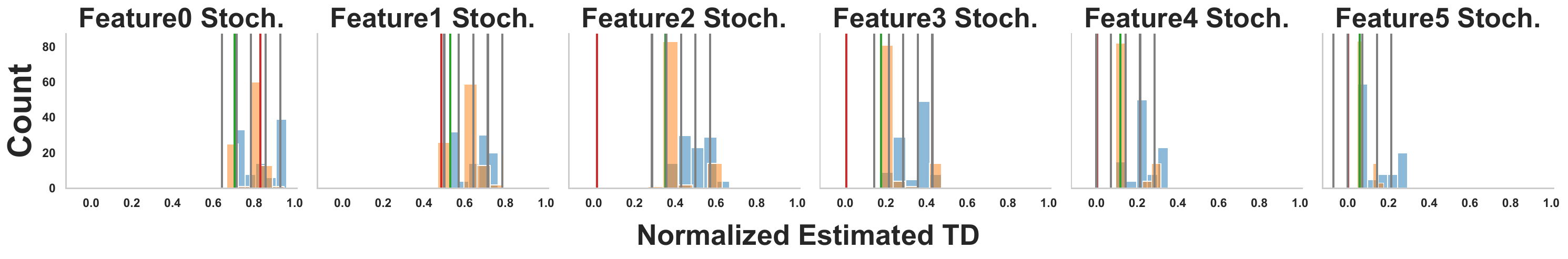}
    \caption{Water pump maintenance}
    \label{fig:time_delay_water_pump}
  \end{subfigure}
  \begin{subfigure}[ht]{0.525\linewidth}
    \centering
    \includegraphics[width=\linewidth]{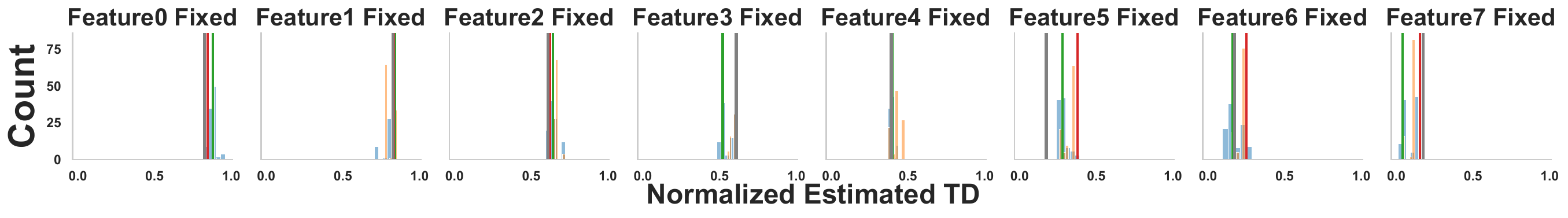}
    \includegraphics[width=\linewidth]{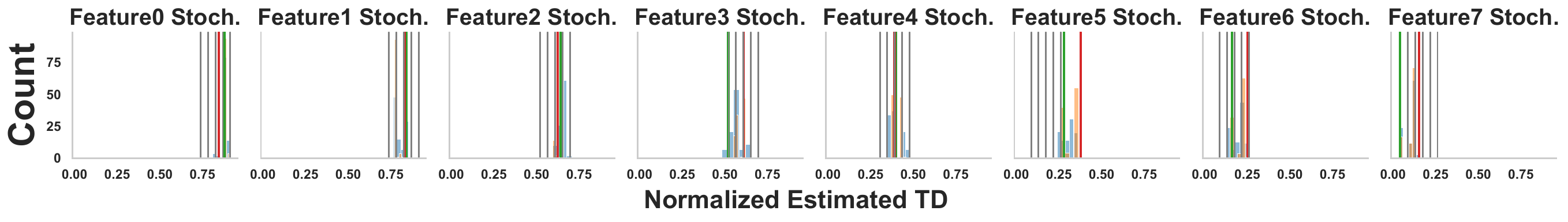}
    \caption{Air quality}
    \label{fig:time_delay_air_quality}
  \end{subfigure}
  \vspace{1em}
  \begin{subfigure}[ht]{\linewidth}
    \centering
     \includegraphics[width=0.475\linewidth]{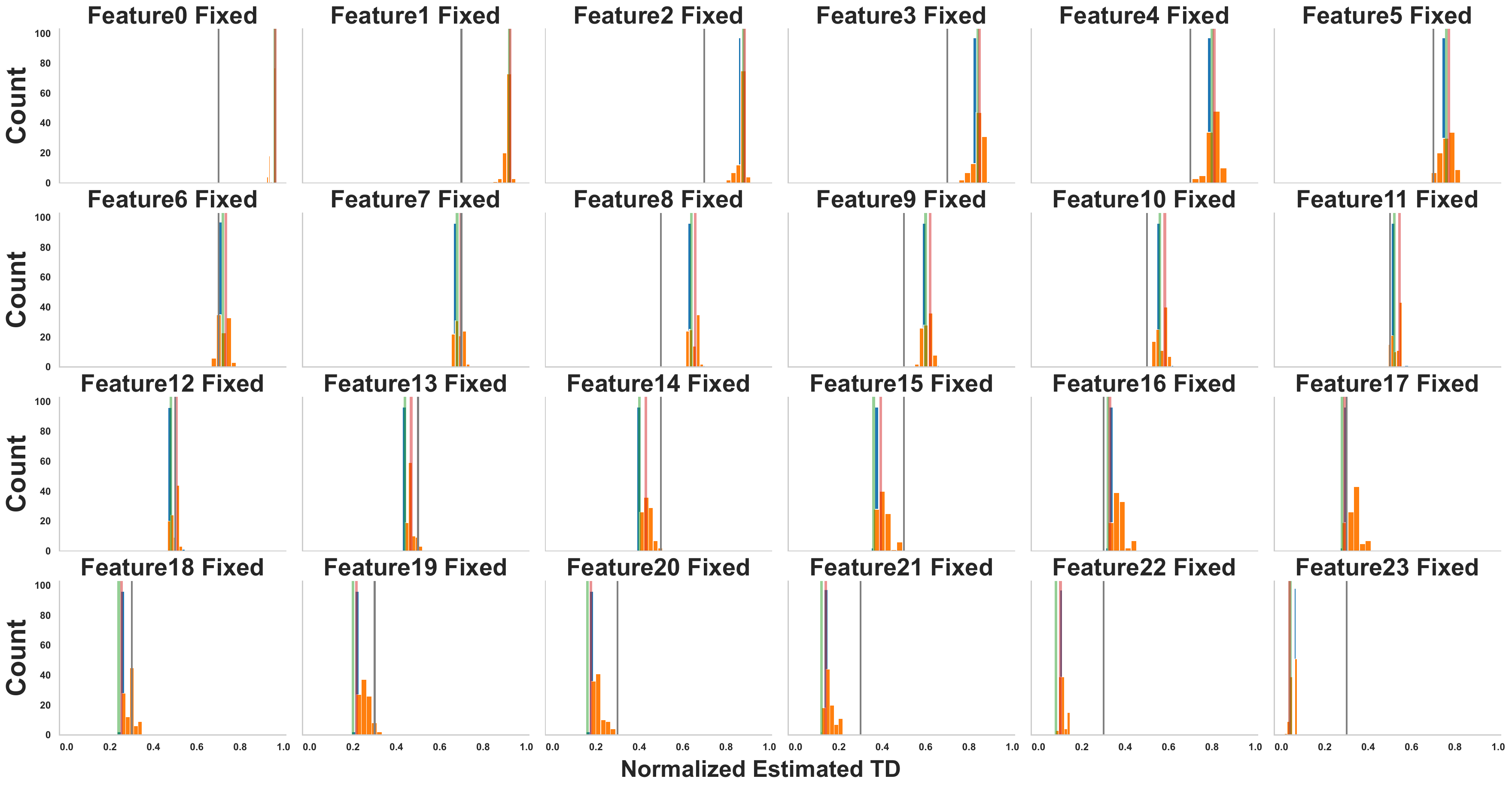}
    \includegraphics[width=0.475\linewidth]{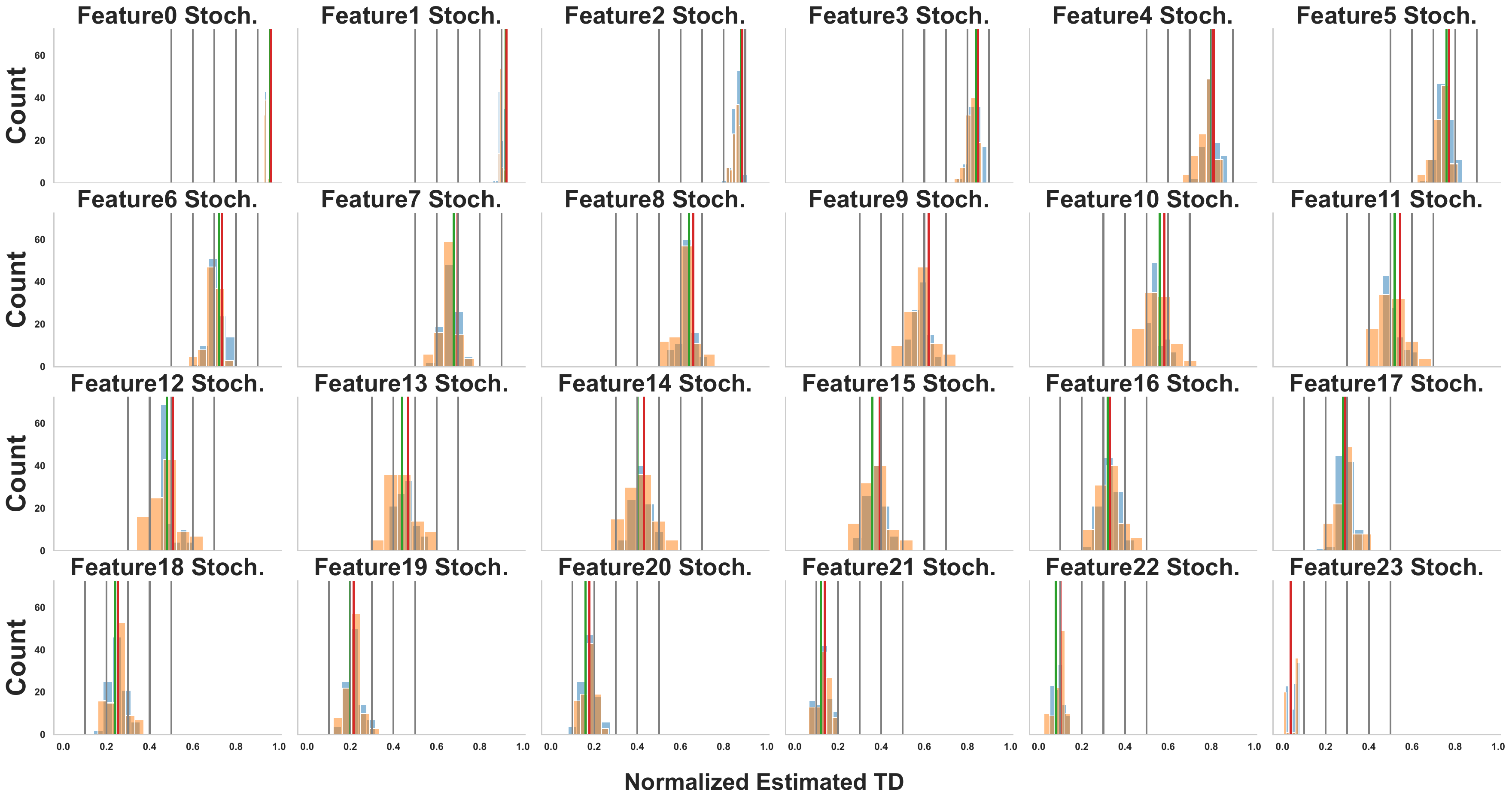}
    \caption{Power demand}
    \label{fig:time_delay_power_demand}
  \end{subfigure}
  \caption{Bootstrap distribution of normalized estimated time delays using TDMI and GCC as the score function respectively. Vertical lines represent the point estimates given by optimizing TDMI and GCC as well as the ground truth (when applicable) time delays from each dataset. The fact that many of these distributions are not even close to point distirbution suggests that there exists a significant amount of uncertainty in the estimated time delays that is being ignored by traditional TDE methods.
  }
  \label{fig:time_delay_dist}
\end{figure*}
\subsection{Choice of Bootstrap Sample Size}
\label{sec:choice_of_b}
The choice of bootstrap sample size $B$ is a determining factor of TSMB' performance and computational cost.
By varying the number of bootstrap sample size $B$, we show that even with a small number of bootstrap samples such as $B=5$, we are still able to obtain impressive predictive quality via an ablation study on the choice of $B$.
In Figure~\ref{fig:abltaion_b_tsmb}, we observe that with the decreasing value of $B$, the impact on predictive performance can be minimal.
Compared to $B=100$, a very small $B$ such as $B=5$ can still result in empirically similar predictive performance for essentially all problems at a $\frac{1}{20}$ cost, significantly alleviating the computational demand of TSMB.

\subsection{Prediction Coverage}
The empirical distribution of model predictions by TSMB is the approximated distribution of $\mathbb{E}[Y|X=x]$ under different $\bm{\delta}$.
One way to examine the validity of this approximation is to check the coverage of the model's prediction.
Plotted 95\% confidence intervals of the empirical coverage are obtained via 5 replications of the experiment.
By having an empirical bootstrap confidence interval at $1-\alpha$ level, the ground truth distribution would cover the observed data approximately $1-\alpha$ of the time.
Figure~\ref{fig:coverage} shows the empirical coverage of TSMB estimators on the test sets of all datasets.
In some dataset such as the occupancy and power demand dataset, the coverage is sensible and close to the expected values.
However, in some other datasets, both TSMB-GCC and TSMB-TDMI produce overly conservative credible intervals resulting in lower-than-expected coverage, despite their excellent predictive capabilities as we observed in Section~\ref{sec:predictive_perf}.
This result suggests that when we need to use the estimated distribution produced by TSMB such as when doing risk analysis, it is important to validate its coverage and it is useful to further investigate how one could robustly obtain well-calibrated TSMB credible intervals.

\subsection{Bootstrap Distribution Diagnostic Analysis}
\label{sec:dist_diagnostic}
While we are primarily interested in the improving predictive modeling performance with TSMB,
the predictions we obtain come from time delay bootstrap samples.
Therefore, it is also useful to understand the characteristics of the time delay bootstrap distribution.
Figure~\ref{fig:time_delay_dist} plots the empirical distribution of time delay bootstrap samples (blue and orange histograms, normalized to be between 0 and 1) as well as the point estimates made by the classic TDMI and GCC TDE methods (green and red lines).
For datasets in which we injected artificial time delays, the ground truth real time delays are also plotted as grey lines.
On many occasions, the time delays estimated by TDMI and GCC (in red and green lines) are far from the ground truth time delays, confirming the motivating example that certain types of data might pose significant challenges for estimating time delays with traditional TDE methods.
On the contrary, instead of a point estimate, TSMB yields a spectrum of time delays forming a distribution. Almost in all cases visualized here, both the ground truth and traditional TDE estimated time delays fall within the range of TSMB time delay distribution with non-trivial density.
This result implicates that the TSMB produces a coherent distribution that can nicely explain both the ground truth time delay as well as the estimated time delay by classic TDE methods in that all these point estimates are essentially manifestations of the underlying dataset, which itself can be viewed as a sample drawn from a prior distribution of time series datasets.

\subsection{TSMB Results with Temporal Fusion Transformer}
\label{sec:tft_exp}

\begin{table*}[ht]
\centering
\resizebox{\textwidth}{!}{%
\setlength{\tabcolsep}{1.5pt} %
\begin{tabular}{ccccccccccc}
\toprule
Method w/ TFT as base model & \multicolumn{2}{c}{Occupancy} & \multicolumn{2}{c}{Pump Maintenance} & \multicolumn{2}{c}{Power Demand} & \multicolumn{2}{c}{Air Quality} & \begin{tabular}[c]{@{}c@{}}Mineral\\Processing\end{tabular} \\
& Fixed & Stochastic & Fixed & Stochastic & Fixed & Stochastic & Fixed & Stochastic & \\
\midrule
TSMB-TDMI & \textbf{0.994 (± 0.001)} & \textbf{0.991 (± 0.001)} & \textbf{0.956 (± 0.001)} & \textbf{0.959 (± 0.001)} & 0.846 (± 0.034) & \textbf{0.549 (± 0.003)} & 0.766 (± 0.002) & 0.570 (± 0.007) & \textbf{0.825 (± 0.001)} \\
TSMB-GCC & 0.991 (± 0.001) & 0.985 (± 0.001) & 0.919 (± 0.054) & 0.953 (± 0.003) & \textbf{0.969 (± 0.002)} & \textbf{0.547 (± 0.005)} & 0.705 (± 0.002) & \textbf{0.582 (± 0.005)} & 0.820 (± 0.001) \\
TDMI & 0.981 & 0.958 & 0.521 & 0.519 & 0.673 & 0.543 & \textbf{0.776} & \textbf{0.587} & 0.804 \\
GCC & 0.975 & 0.955 & 0.766 & 0.779 & 0.730 & 0.534 & 0.561 & 0.619 & 0.729 \\
Real time delay & 0.984 & 0.984 & 0.534 & 0.534 & 0.957 & 0.957 & -0.114 & -0.114 & N/A \\
No Alignment & 0.749 & 0.758 & 0.766 & 0.925 & 0.480 & 0.484 & 0.047 & -0.046 & 0.790 \\
\bottomrule
\end{tabular}
}
\vspace{0.1in}
\caption{Absolute performance ($R^2$ for regression tasks and AUC for classification tasks) using Temporal Fusion Transformers (TFT) as base models. TSMB-based methods still generally outperform traditional TDE methods.}
\label{tab:tft_performance}
\end{table*}

Throughout our experiments,
we have been using GBDT as the base model in all of our evaluation as GBDT is generally well-performing across a wide variety of tasks and TSMB is a model-agnostic method.
To examine how TSMB can contribute to more complex deep models that can in theory implicitly account for time delays, we have reproduced our main experimental results with the Temporal Fusion Transformer (TFT)~\citep{lim2021temporal} model.

Table~\ref{tab:tft_performance} shows the predictive performance of TFT when trained using TSMB compared with other TDE baselines.
Figure~\ref{fig:tft_tsmb_b} shows TSMB's predictive performance with varying number of bootstrap samples $B$.
Both results demonstrate TSMB is able to improve the base model's performance empirically even if the model (TFT) can implicitly account for the time delays.

\begin{figure}[ht]
  \centering
  \includegraphics[width=0.95\linewidth]
  {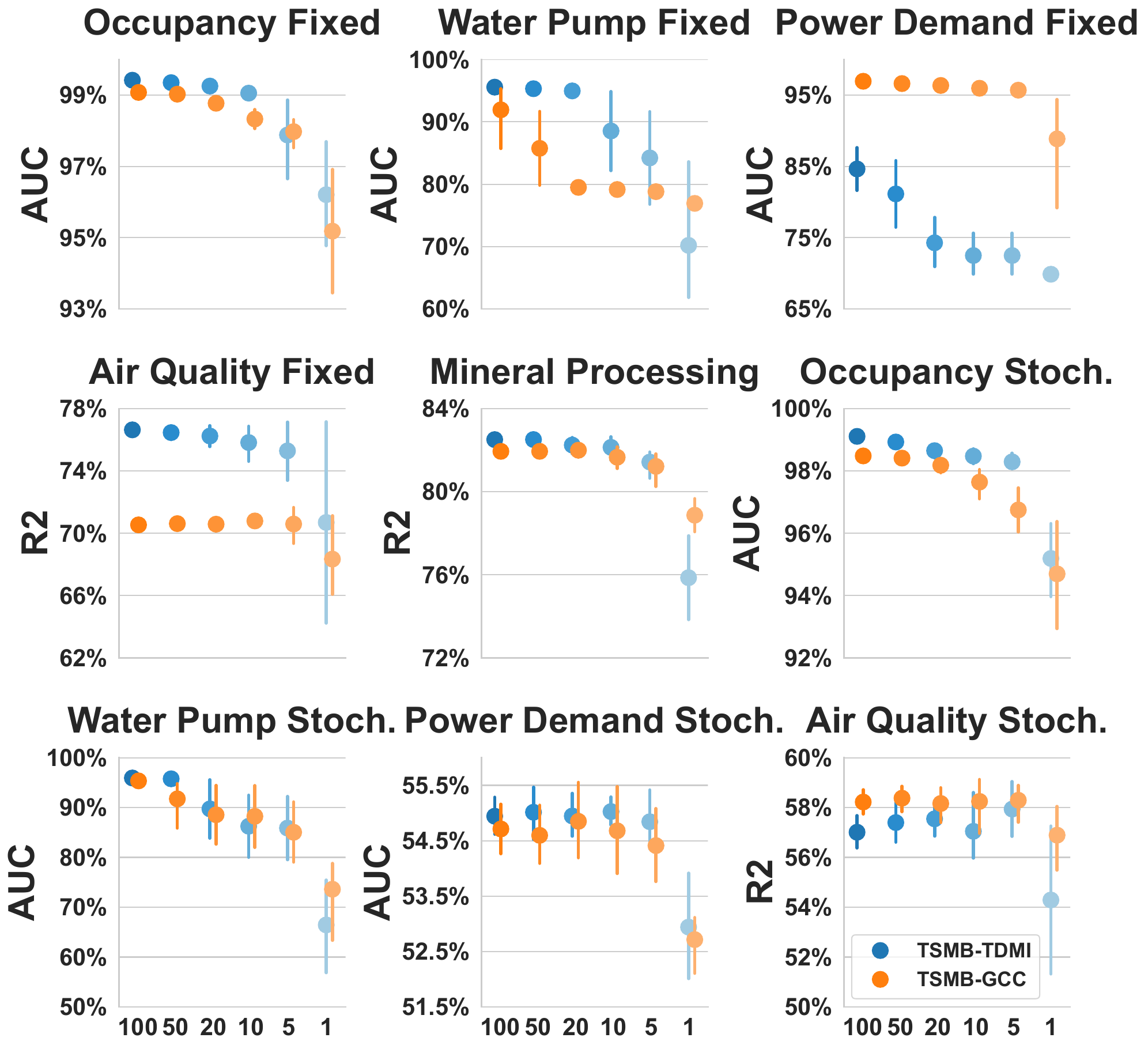}
  \caption{Performance of Temporal Fusion Transformer (TFT) model under different values of B.}
  \label{fig:tft_tsmb_b}
\end{figure}

\balance
\section{Discussion}
TSMB's training process, while effective, is computationally intensive due to the iterative optimization required over bootstrapped datasets.
While straightforward improvement such as selecting a smaller bootstrap sample size can help manage the computational load as studied in Section~\ref{sec:choice_of_b},
there are other strategies that can be leveraged to further reduce the computational demands.

Firstly, bootstrap samples as well as downstream time delay identification and model training are performed independently in TSMB.
Therefore, they can all be trivially parallelized on modern hardware.
Secondly, time delay identification and model training are optimization problems.
For time delay identification, we aim to identify time delays that maximize a specific score function (e.g., GCC or TDMI).
For model training, we aim to identify model parameters that minimize the model’s loss function (e.g., MSE or cross-entropy).
Given that in TSMB we are essentially solving the same optimization problems with slightly different datasets resulted from time series bootstrap, the time delays and model parameters identified in each replication of TSMB are likely to be similar, as empirically verified in Figure~\ref{fig:time_delay_dist}.
Therefore, once a solution for the time delay in one bootstrap sample is obtained, it can serve as an initialization heuristic for the other $B-1$ time delay identification optimization problems. Similarly, this warm-starting idea can be applied to model training.
Finally, computational efficiency can be further enhanced by subsampling the dataset before applying TSMB. However, it is important to note that the sensitivity to subsampling may vary depending on the model and the dataset. For instance, while model training might require the entire dataset to achieve a high-quality model, the time delay estimation step might be less sensitive to subsampling.

Additionally, we have discussed a few options to computationally efficiently approximate TSMB in Appendix~\ref{sec:tsmb_variants} that can be applied in appropriate use cases.

Aside from room for computational efficiency improvement, the calibration of TSMB's predictions, as illustrated by Figure~\ref{fig:coverage}, presents opportunities for refinement. How to construct a better calibrated predictive model using TSMB that can be used for decision making could be an useful future research direction.

\section{Conclusion}
In this work, we revisited the challenge of time delay estimation and how elegantly handling potentially stochastic time delays can have a significant impact on the accuracy and reliability of predictive models across a spectrum of data-intensive applications.
Time Series Model Bootstrap, or TSMB, is introduced as a robust solution to the inherent complexities and unpredictable dynamics characteristic of multivariate time series data. Through rigorous experimentation, TSMB has demonstrated its capacity to enhance prediction accuracy and robustness significantly.

As we advance into a future shaped by an ever-growing reliance on data-driven systems, the relevance and potential of TSMB in addressing the nuances of time delay in predictive modeling become increasingly evident.
We hope that the insights and methodologies presented in this work will not only enhance current practices but also inspire continued innovation, fostering advancements in predictive modeling techniques across diverse and evolving domains.

\clearpage
\bibliographystyle{ACM-Reference-Format}
\balance
\bibliography{references, paperpile}

\clearpage
\appendix
\section*{Appendix}

\begin{table*}[ht]
\resizebox{0.9\textwidth}{!}{
  \begin{tabular}{l|cccc}
    \hline
    \textbf{Dataset} & \textbf{Sampling Freq.} & \textbf{Search Range} & \textbf{Injected Time Delay} & \textbf{Train/Val/Test Split} \\
    \hline
    \hline
    Occupancy - Fixed & 1min & [10, 180]min & [150, 120, 90, 60, 30]min & 25\%/25\%/50\% \\
    \cline{1-5}
    \multirow{5}{*}{Occupancy - Stochastic} & \multirow{5}{*}{1min} & \multirow{5}{*}{[10, 180]min} & Possible TD 1: [180, 140, 100, 80, 50]min & \multirow{5}{*}{25\%/25\%/50\%} \\
    & & & Possible TD 2: [170, 130, 100, 70, 40]min & \\
    & & & Possible TD 3: [150, 120, 90, 60, 30]min & \\
    & & & Possible TD 4: [140, 110, 80, 50, 20]min & \\
    & & & Possible TD 5: [130, 100, 70, 40, 20]min & \\
    \cline{1-5}
    Pump Maintenance - Fixed & 1min & [10, 80]min & [65, 55, 40, 30, 20, 15]min & 50\%/N/A/50\% \\
    \cline{1-5}
    \multirow{5}{*}{Pump Maintenance - Stochastic} & \multirow{5}{*}{1min} & \multirow{5}{*}{[10, 80]min} & Possible TD 1: [75, 65, 50, 40, 30, 25]min & \multirow{5}{*}{50\%/N/A/50\%} \\
    & & & Possible TD 2: [70, 60, 45, 35, 25, 20]min & \\
    & & & Possible TD 3: [65, 55, 40, 30, 20, 15]min & \\
    & & & Possible TD 4: [60, 50, 35, 25, 15, 10]min & \\
    & & & Possible TD 5: [55, 45, 30, 20, 10, 5]min & \\
    \cline{1-5}
    Power Demand - Fixed & 1min & [0, 10]min & [7,7,7,7,7,7,7,7,5,5,5,5,5,5,5,5,3,3,3,3,3,3,3,3]min & 6\%/N/A/94\% \\
    \cline{1-5}
    \multirow{5}{*}{Power Demand - Stochastic} & \multirow{5}{*}{1min} & \multirow{5}{*}{[0, 10]min} & Possible TD 1: [9,9,9,9,9,9,9,9,7,7,7,7,7,7,7,7,5,5,5,5,5,5,5,5]min & \multirow{5}{*}{6\%/N/A/94\%} \\
    & & & Possible TD 2: [8,8,8,8,8,8,8,8,6,6,6,6,6,6,6,6,4,4,4,4,4,4,4,4]min & \\
    & & & Possible TD 3: [7,7,7,7,7,7,7,7,5,5,5,5,5,5,5,5,3,3,3,3,3,3,3,3]min & \\
    & & & Possible TD 4: [6,6,6,6,6,6,6,6,4,4,4,4,4,4,4,4,2,2,2,2,2,2,2,2]min & \\
    & & & Possible TD 5: [5,5,5,5,5,5,5,5,3,3,3,3,3,3,3,3,1,1,1,1,1,1,1,1]min & \\
    \cline{1-5}
    Air Quality - Fixed & 1hr & [1, 24]hr & [20, 20, 15, 15, 10, 5, 5, 5]hr & 50\%/25\%/25\% \\
    \cline{1-5}
    \multirow{5}{*}{Air Quality - Stochastic} & \multirow{5}{*}{1hr} & \multirow{5}{*}{[1, 24]hr} & Possible TD 1: [22, 22, 17, 17, 12, 7, 7, 7]hr & \multirow{5}{*}{50\%/25\%/25\%} \\
    & & & Possible TD 2: [21, 21, 16, 16, 11, 6, 6, 6]hr & \\
    & & & Possible TD 3: [20, 20, 15, 15, 10, 5, 5, 5]hr & \\
    & & & Possible TD 4: [19, 19, 14, 14, 9, 4, 4, 4]hr & \\
    & & & Possible TD 5: [18, 18, 13, 13, 8, 3, 3, 3]hr & \\
    \cline{1-5}
    Mineral Processing & 7s/2hr ($\bm{X}$/$\bm{y}$) & [0, 90]min & N/A & 50\%/25\%/25\% \\
    \hline
  \end{tabular}
}
  \vspace{0.5em}  
  \caption{Experimental dataset details.}
  \label{tab:dataset_detail}
\end{table*}

\begin{table*}[htb]
\centering
\resizebox{0.9\textwidth}{!}{%
\setlength{\tabcolsep}{1.5pt} %
\begin{tabular}{ccccccccccc}
\toprule
Method & \multicolumn{2}{c}{Occupancy} & \multicolumn{2}{c}{Pump Maintenance} & \multicolumn{2}{c}{Power Demand} & \multicolumn{2}{c}{Air Quality} & \begin{tabular}[c]{@{}c@{}}Mineral\\Processing\end{tabular} \\
& Fixed & Stochastic & Fixed & Stochastic & Fixed & Stochastic & Fixed & Stochastic & \\
\midrule
TSMB-TDMI (ours) & \textbf{0.995(± 0.000)} & \textbf{0.950(± 0.007)} & \textbf{0.999(± 0.000)} & \textbf{0.999(± 0.000)} & 0.945(± 0.021) & 0.511(± 0.010) & \textbf{0.766(± 0.002)} & 0.571(± 0.002) & 0.870(± 0.000) \\
TSMB-GCC (ours) & \textbf{0.995(± 0.000)} & 0.929(± 0.002) & \textbf{0.999(± 0.000)} & \textbf{0.999(± 0.000)} & \textbf{0.987(± 0.000)} & 0.525(± 0.006) & 0.708(± 0.005) & 0.549(± 0.006) & \textbf{0.871(± 0.000)} \\
TDB-TDMI (TSMB variant) & 0.915(± 0.005) & 0.806 (± 0.009) & 0.993 (± 0.001) & 0.993 (± 0.001) & 0.848 (± 0.000) & 0.549 (± 0.012) & \textbf{0.764 (± 0.004)} & 0.548 (± 0.014) & 0.866 (± 0.001) \\
TDB-GCC (TSMB variant) & 0.851 (± 0.011) & 0.796 (± 0.010) & 0.992 (± 0.000) & 0.991 (± 0.000) & 0.964 (± 0.000) & 0.519 (± 0.016) & 0.634 (± 0.002) & 0.532 (± 0.010) & 0.867 (± 0.001) \\
Perturbed-TDMI (TSMB variant) & 0.951 (± 0.005) & 0.943 (± 0.002) & \textbf{0.999(± 0.000)} & \textbf{0.999(± 0.000)} & 0.941 (± 0.003) & \textbf{0.568 (± 0.001)} & \textbf{0.765 (± 0.001)} & 0.581 (± 0.003) & 0.868 (± 0.000) \\
Perturbed-GCC (TSMB variant) & 0.951 (± 0.004) & 0.942 (± 0.001) & \textbf{1.000 (± 0.000)} & \textbf{1.000 (± 0.000)} & 0.964 (± 0.002) & 0.533 (± 0.002) & 0.703 (± 0.003) & \textbf{0.587 (± 0.002)} & 0.867 (± 0.000) \\
TDMI & 0.923 & 0.791 & 0.991 & 0.990 & 0.847 & 0.515 & 0.760 & 0.544 & 0.867 \\
GCC & 0.923 & 0.811 & 0.990 & 0.988 & 0.841 & 0.501 & 0.691 & 0.538 & 0.864 \\
Real time delay & 0.988 & 0.988 & 0.991 & 0.991 & 0.964 & 0.964 & 1.000 & 1.000 & N/A \\
No Alignment & 0.728 & 0.722 & 0.979 & 0.979 & 0.509 & 0.519 & 0.085 & -0.106 & 0.860 \\
\bottomrule
\end{tabular}
}
\vspace{0.5em}
\caption{Absolute performance ($R^2$ for regression tasks and AUC for classification tasks) on all methods. We do not know the real time delay for the mineral processing dataset as it is a real-world dataset with unknown time delays.
95\% confidence intervals are reported for TSMB-based methods.
For other baselines, repeated experiments result in the same metric value, hence the CI is effectively zero on the specific datasets evaluated and not omitted.}
\label{tab:full_performance}
\end{table*}

\section{Experiment Dataset Details}
\label{sec:dataset_detail}
Table~\ref{tab:dataset_detail} shows detailed information about the datasets on which our experiments are performed.
Across all datasets and baselines

\section{Possible Variants of TSMB}
\label{sec:tsmb_variants}
Performing TSMB on a large dataset with complex models can be expensive.
There are generally two components in TSMB that are computationally demanding:
optimizing for $\bm{\hat{\delta}}^b$ on  $\mathcal{D}^b$, and the construction of each bootstrap model $f_{\bm{\hat{\delta}}^b}$.
Here we consider two classes of TSMB variants that respectively allow us to construct fewer model during training (time delay bootstrap) and performing fewer time delay optimizations (perturbed model average).

\subsection{Description of TSMB Variants}
\subsubsection{Time Delay Bootstrap (TDB)}
\label{sec:tdb}
One way to approximately bypass the need of integrating over $\bm{\delta}$ values is to replace its bootstrap distribution with a single point distribution (i.e., a point estimate) centered around its expectation.
Specifically, when constructing the predictive model, we are making the following approximation:
$$ \mathbb{E}_{\bm{\delta}}[\mathbb{E}[Y|X=x, \bm{\delta}]] \approx \mathbb{E}[Y|X=x, \bm{\delta} = \mathbb{E}[\bm{\delta}]]$$
Similar ideas are often employed in empirical Bayes methods~\citep{maritz2018empirical}.
Figure~\ref{fig:time_delay_dist} shows the time delay bootstrap distribution estimated using TDMI and GCC as the score function respectively in the nine datasets we use in this paper.
We can see that the empirical bootstrap distributions are not necessarily centered around the corresponding point estimates given by traditional TDE methods, but typically the ground truth time delay (when applicable) almost always falls under regions with non-trivial mass.
Since we only use the bootstrap samples up till the estimation of $\mathbb{E}[\bm{\delta}]$, we refer this method as time delay bootstrap (TDB).

\subsubsection{Perturbed Model Average}
\label{sec:perturbed_model}
Although TDB allows us to bypass the cost of constructing $B$ bootstrap model, it still requires optimizing $\bm{\hat{\delta}}^b$ for $B$ times.
This procedure may still be costly with no obvious advantage and at the end of the day, we are still constructing a single machine learning model based on a point estimate of the time delay.
Instead of using the bootstrap samples to approximate the distribution of $\bm{\delta}$, one may simply assume that $\bm{\delta} \sim \mathcal{F}^P_{\bm{\delta}}$ for some parametric distribution $\mathcal{F}^P_{\bm{\delta}}$ that can reasonably approximate the true underlying distribution $\mathcal{F}_{\bm{\delta}}$. 
Such approximation allows us to draw essentially as many time delay samples as needed.
Following the empirical Bayes idea, we let $\mathcal{F}^P_{\bm{\delta}} = N(\bm{\mu}=\bm{\delta}_{TDE}, \bm{\Sigma} = 0.1 \mathbf{I})$ on a normalized scale where $\bm{\delta}_{TDE}$ is the point estimate\footnote{Since we normalize all time delays to be between 0 and 1, when sampling from $\mathcal{F}^P_{\bm{\delta}}$ we clip all sample values below 0 or above 1 to be 0 or 1. This makes $\mathcal{F}^P_{\bm{\delta}}$ technically a censored normal distribution.} given by traditional TDE methods such as ones obtained by maximizing TDMI or GCC.
While this approach seems less principled compared to TSMB and TDB, it allows for a similar model ensemble as in TSMB, while being able to avoid the costly repeated optimization of $\bm{\hat{\delta}}^b$.
Since we use a Gaussian distribution to perturb the estimated time delay around the point estimation given by traditional TDE methods and aggregate model predictions for each of those perturbed time delay, we refer to this method as perturbed model average.
As we will present in the experiment section, perturbed model average methods, though on average perform worse than their TSMB counterparts, still offer competitive performance gain compared to traditional TDE methods.

\begin{figure}[ht]
  \centering
  \begin{subfigure}[b]{\linewidth}
    \centering
    \includegraphics[width=0.95\linewidth]{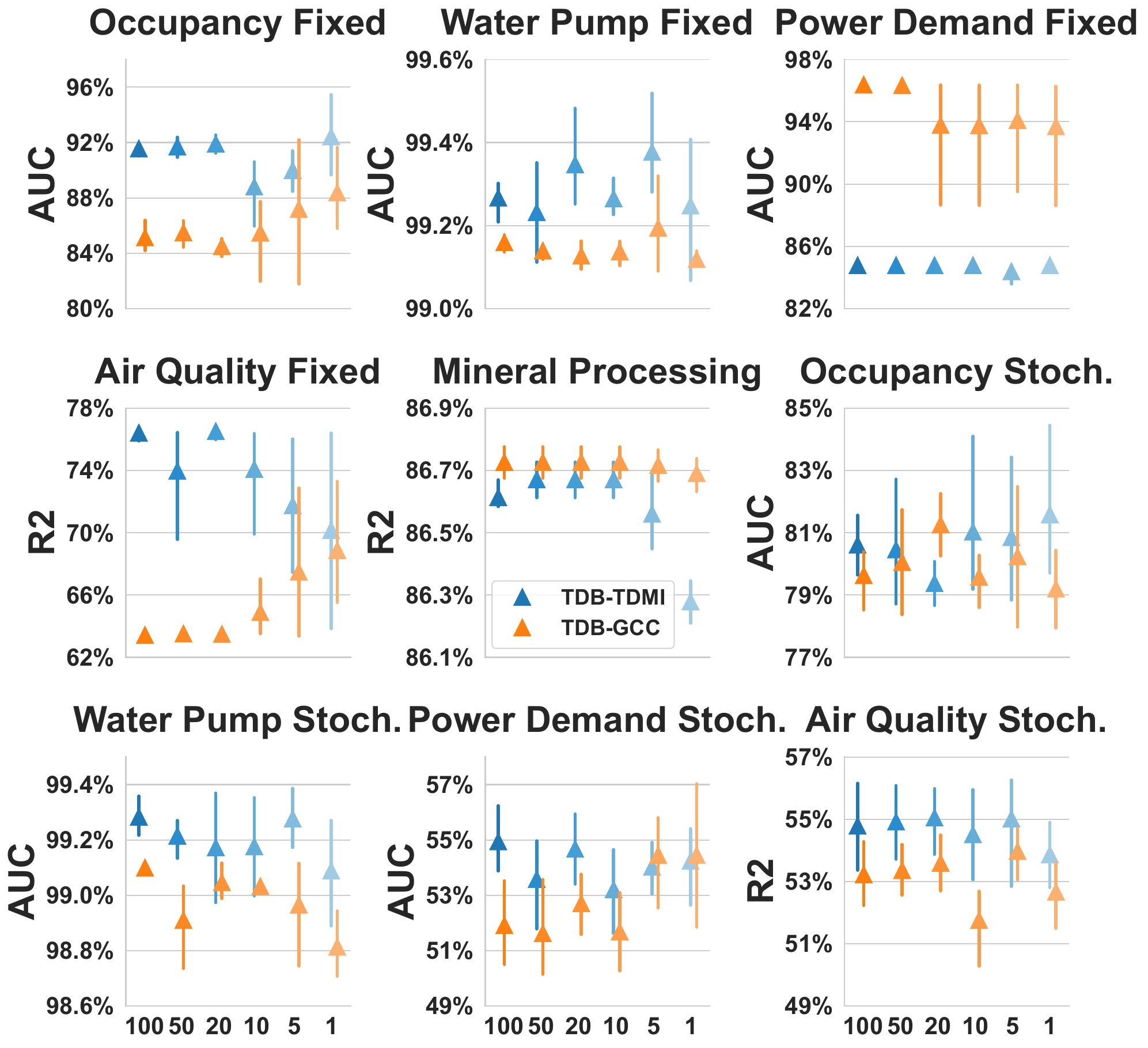}
    \caption{Time delay bootstrap (TDB)}
    \label{fig:tdb_ablation_b}
  \end{subfigure}
  \begin{subfigure}[b]{\linewidth}
    \centering
    \includegraphics[width=0.95\linewidth]{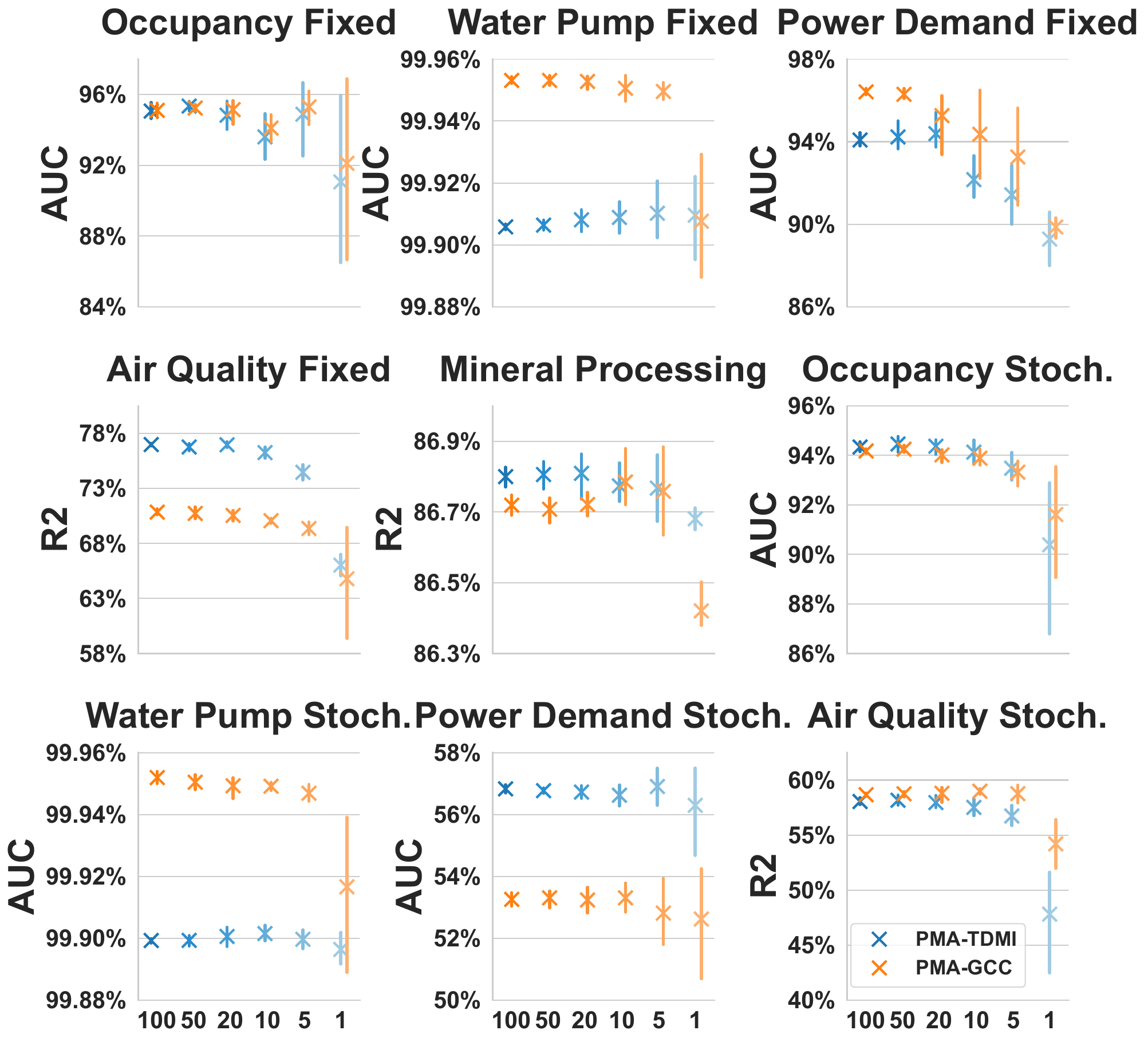}
    \caption{Perturbed model average}
    \label{fig:perturbed_ablation_b}
  \end{subfigure}
  \caption{Ablation study on the choice of bootstrap sample size $B$ for time delay bootstrap (TDB) and perturbed model average. We observe patterns similar to the ones with TSMB where using a relatively small bootstrap sample size (e.g., $B=5$) can still result in a good predictive performance, further mitigating the computational demand on these TSMB variants.}
  \label{fig:ablation_b_full}
\end{figure}

\subsection{Experimental Results with TSMB Variants}
Here, we compare variants of TSMB described in Section~\ref{sec:tdb} and \ref{sec:perturbed_model}.
We additionally evaluated the performance of TSMB variants.
Table~\ref{tab:full_performance} shows the full suite of relative predictive performance values.

We further performed ablation study on the choice of bootstrap sample size $B$ and have observed similar pattern for TDB and perturbed model average (Figure~\ref{fig:ablation_b_full}) where even a small bootstrap sample size (e.g., $B=5$) can still result in competitive predictive performance.
When a small $B$ is used together with the TSMB variants, the computational demand can be significantly alleviated.

\end{document}